\newcommand{\ie}{i.e., }
\newcommand{\eg}{e.g., }
\newcommand{\Real}{\mathbb{R}}
\newcommand{\Rone}{\Real^1}
\newcommand{\Rn}{\Real^n}
\newcommand{\Rm}{\Real^m}
\newcommand{\Rmn}{\Real^{m+n}}
\newcommand{\bfx}{\mathbf{x}}
\newcommand{\bfy}{\mathbf{y}}
\newcommand{\bfa}{\mathbf{a}}
\newcommand{\bfo}{\mathbf{o}}
\newcommand{\bfxy}{\bfx,\bfy}
\newcommand{\bfxFx}{\bfx,F(\bfx)}
\newcommand{\bfu}{\mathbf{u}}
\newcommand{\bfv}{\mathbf{v}}
\newcommand{\argmin}{\text{argmin}}
\newcommand{\argminy}{\argmin_\bfy}
\newcommand{\PRn}{P(\Rn) \setminus \{ \emptyset \}}
\newcommand{\pete}[1]{\ifdefined\DRAFT \textcolor{cyan}{Pete: #1} \else \fi}
\newcommand{\johnny}[1]{\ifdefined\DRAFT \textcolor{purple}{Johnny: #1} \else \fi}
\newcommand{\igor}[1]{\ifdefined\DRAFT \textcolor{brown}{Igor: #1} \else \fi}
\definecolor{lauragreen}{RGB}{0, 120, 0}
\newcommand{\cmark}{\textcolor[HTML]{59a14f}{\ding{51}}}%
\newcommand{\xmark}{\textcolor[HTML]{e15759}{\ding{55}}}%
\newtheorem{Theorem}{Theorem}
\newtheorem{lemma}[Theorem]{Lemma}
\newenvironment{customTheorem}[1]
  {\count@\c@Theorem
   \global\c@Theorem#1 %
    \global\advance\c@Theorem\m@ne
   \Theorem}
  {\endTheorem
   \global\c@Theorem\count@}
\title{Implicit Behavioral Cloning}
\author{
  \textbf{Pete Florence, Corey Lynch, Andy Zeng, Oscar Ramirez, Ayzaan Wahid,}\\
  \textbf{Laura Downs, Adrian Wong, Johnny Lee, Igor Mordatch, Jonathan Tompson}\\ \\
  Robotics at Google
}
\begin{document}
\maketitle


\vspace{-2.5em}
\begin{abstract}
We find that across a wide range of robot policy learning scenarios, treating supervised policy learning with an {\em{implicit model}} generally performs better, on average, than
commonly used explicit models.  We present extensive experiments on this finding, and we provide both intuitive insight and theoretical arguments distinguishing the properties of implicit models compared to their explicit counterparts, particularly with respect to approximating complex, potentially discontinuous and multi-valued (set-valued) functions.
On robotic policy learning tasks we show that implicit behavioral cloning policies with energy-based models (EBM) often outperform common explicit (Mean Square Error, or Mixture Density) behavioral cloning policies, including on tasks with high-dimensional action spaces and visual image inputs. We find these policies provide competitive results or outperform state-of-the-art offline reinforcement learning methods on the challenging human-expert tasks from the D4RL benchmark suite, despite using no reward information. In the real world, robots with implicit policies can learn complex and remarkably subtle behaviors on contact-rich tasks from human demonstrations, including tasks with high combinatorial complexity and tasks requiring 1mm precision. 





\end{abstract}

\keywords{Implicit Models, Energy-Based Models, Imitation Learning}


\addtocontents{toc}{\protect\setcounter{tocdepth}{-1}} 
\section{Introduction}

Behavioral cloning (BC) \cite{pomerleau1989alvinn} remains one of the simplest machine learning methods to acquire robotic skills in the real world.
BC casts the imitation of expert demonstrations as a supervised learning problem, and despite valid concerns (both empirical and theoretical) about its shortcomings (\eg compounding errors \cite{ross2011reduction, tu2021closing}), in practice it enables some of the most compelling results of real robots generalizing complex behaviors to new unstructured scenarios \cite{zhang2018deep, florence2019self, zeng2020tossingbot}.
%
Although considerable research has been devoted to developing new imitation learning methods \cite{ho2016generative,abbeel2004apprenticeship,ho2016model} to address BC's known limitations, here we investigate a fundamental design decision that has largely been overlooked: the form of the policy itself.
Like many other supervised learning methods, BC policies are often represented by explicit continuous feed-forward models (\eg deep networks) of the form $\hat{\bfa} = F_{\theta}(\bfo)$ that map directly from input observations $\bfo$ to output actions $\bfa \in {\mathcal{A}}$. But what if $F_{\theta}$ is the wrong choice? 
%
%

In this work, we propose to reformulate BC using {\em{implicit models}} -- specifically, the composition of $\arg\min$ with a continuous energy function $E_{\theta}$ (see Sec.~\ref{sec:background} for definition) to represent the policy $\pi_{\theta}$:
$$\hat{\bfa} = \underset{\bfa \in \mathcal{A}}{\arg\min} \ \ E_{\theta}(\bfo,\bfa) \qquad \text{instead of} \qquad \hat{\bfa} = F_{\theta}(\bfo) \ \ .$$
This formulates imitation as a conditional energy-based modeling (EBM) problem \cite{lecun2006tutorial} (Fig.~\ref{fig:teaser}), and at inference time (given $\bfo$) performs implicit regression by optimizing for the optimal action $\hat{\bfa}$ via sampling or gradient descent \cite{welling2011bayesian,du2019implicit}.
While implicit models have been used as partial components (\eg value functions) for various reinforcement learning (RL) methods \cite{haarnoja2017reinforcement, du2020planning, kostrikov2021offline, nachum2021provable}, our work presents a distinct yet simple method: 
do BC with implicit models. Further, this enables a unique case study that investigates the choice between implicit vs. explicit policies that may inform other policy learning settings beyond BC.

Our experiments show that this simple change can lead to remarkable improvements in performance across a wide range of contact-rich tasks: from bi-manually scooping piles of small objects into bowls with spatulas, to precisely pushing blocks into fixtures with tight 1mm tolerances, to sorting mixed collections of blocks by their colors. Results show that implicit models for BC exhibit the capacity to learn long-horizon, closed-loop visuomotor tasks better than their explicit counterparts -- and surprisingly, give rise to a new class of BC baselines that are competitive with state-of-the-art offline RL algorithms on standard simulated benchmarks \cite{fu2020d4rl}.
To shed light on these results, we provide observations on the intuitive properties of implicit models, and present theoretical justification that we believe are highly relevant to part of their success: their ability to represent not only multi-modal distributions, but also discontinuous functions.
%
%

\begin{figure}[t]
  \centering
  \vspace{-1em}
  \begin{minipage}[b]{1.0\textwidth}
    \includegraphics[width=\textwidth]{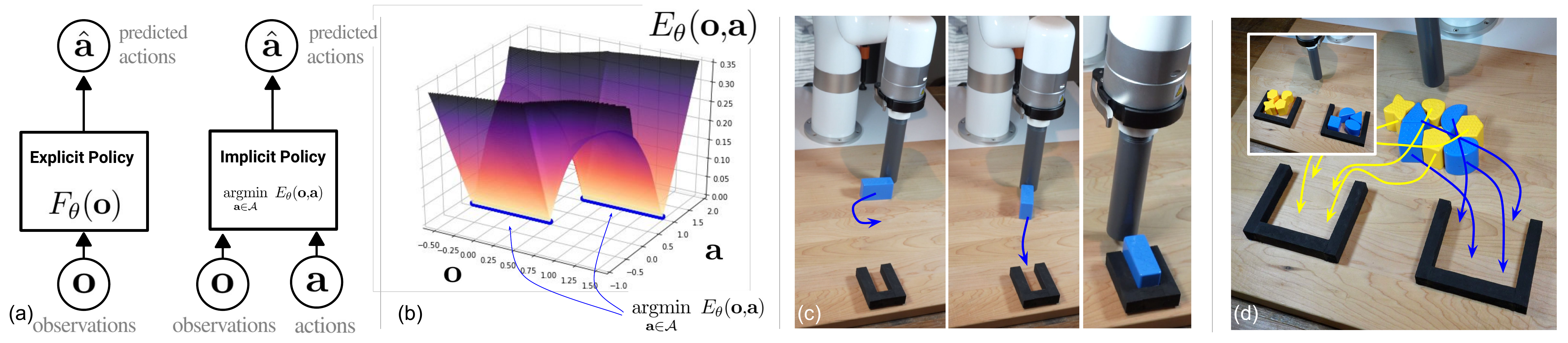}
    \caption{(a) In contrast to explicit policies, implicit policies leverage parameterized energy functions that take both observations (e.g. images) and actions as inputs, and optimize for actions that minimize the energy landscape (b). For learning complex, closed-loop, multimodal visuomotor tasks such as precise block insertion (c) and sorting (d) from human demonstrations, implicit policies perform substantially better than explicit ones.}
  \end{minipage}
  \vspace{-2.0em}
\end{figure} \label{fig:teaser}

\textbf{Paper Organization.}  After a brief background (Sec.~\ref{sec:background}), to build intuition on 
the nature of implicit models,
we present their empirical properties (Sec.~\ref{sec:empirical-properties}).  We then present our main results with policy learning (Sec.~\ref{sec:policy-results}), both in simulated tasks and in the real world. Inspired by these results, we provide theoretical insight (Sec.~\ref{sec:theory}), followed by related work (Sec.~\ref{sec:related}) and conclusions (Sec.~\ref{sec:conclusion}).

\section{Background: Implicit Model Training and Inference}\label{sec:background}

We define an {\em{implicit model}} as any composition $(\arg\min_\bfy \ \circ \ E_{\theta}(\bfx,\bfy))$, in which inference is performed using some general-purpose function approximator $E: \mathbb{R}^{m + n} \rightarrow \mathbb{R}^1$ to solve the optimization problem $\hat{\bfy} = \arg\min_{\bfy} \ E_{\theta}(\bfx, \bfy)$. 
We use techniques from the energy-based model (EBM) literature to train such a model.
Given a dataset of samples $\{\bfx_i, \bfy_i\}$, and regression bounds $\mathbf{y}_{\text{min}}, \mathbf{y}_{\text{max}} \in \mathbb{R}^m$, training consists of generating a set of negative counter-examples ${\color{red}\{\tilde{\mathbf{y}}^j_i\}_{j=1}^{N_{\text{neg.}}}  }$ for each sample $\bfx_i$ in a batch, and employing an InfoNCE-style \cite{oord2018representation} loss function. This loss equates to the negative log likelihood of $p_{\theta}(\mathbf{y} |\mathbf{x}) = \frac{\exp(-E_{\theta}(\mathbf{x}, \mathbf{y}))}{Z(\mathbf{x}, \theta)}$, and the counter-examples are used to estimate $Z(\mathbf{x}_i, \theta)$: 
$$\mathcal{L}_{\text{InfoNCE}} = \sum_{i=1}^N -\log \big( \tilde{p}_{\theta}( {\color{black} \mathbf{y}_i} | \ \mathbf{x}, \ {\color{red}\{\tilde{\mathbf{y}}^j_i\}_{j=1}^{N_{\text{neg.}}}  } ) \big), \quad   
\tilde{p}_{\theta}( {\color{black} \mathbf{y}_i} | \ \mathbf{x}, \ {\color{red}\{\tilde{\mathbf{y}}^j_i\}_{j=1}^{N_{\text{neg.}}}  } ) =  \frac{e^{-E_{\theta}(\mathbf{x}_i, {\color{black} \mathbf{y}_i}  )}} {e^{-E_{\theta}( \mathbf{x}_i, {\color{black} \mathbf{y}_i})} +  {\color{red} \sum_{j=1}^{N_{\text{neg}}}} e^{-E_{\theta}(\mathbf{x}_i, {\color{red} \tilde{\mathbf{y}}^j_i} )} }
$$
With a trained energy model $E_{\theta}(\bfx,\bfy)$, implicit inference can be performed with stochastic optimization to solve $\hat{\bfy} = \arg\min_{\bfy} \ E_{\theta}(\bfx, \bfy)$. To demonstrate a breadth of approaches, we present results with three different EBM training and inference methods discussed below, however a comprehensive comparison of all EBM variants is outside the scope of this paper; see \cite{song2021train} for a comprehensive reference. We use either a) a derivative-free (sampling-based) optimization procedure, b) an auto-regressive variant of the derivative-free optimizer which performs coordinate descent, or c) gradient-based Langevin sampling~\cite{welling2011bayesian,du2019implicit} with gradient penalty~\cite{gradientpenalty2021} loss during training -- see the Appendix  for descriptions and comparisons of these choices.

\section{Intriguing Properties of Implicit vs. Explicit Models}\label{sec:empirical-properties}


Consider an explicit model $\bfy=f_{\theta}(\bfx)$, and an implicit model $\arg\min_\bfy E_{\theta}(\bfx,\bfy)$ where both $f_{\theta}(\cdot)$ and $E_{\theta}(\cdot)$ are represented by almost-identical network architectures.
Comparing these models, we examine: (i) how do they perform near discontinuities?, (ii) how do they fit multi-valued functions?, and (iii) how do they extrapolate?  For both $f_{\theta}$ and $E_{\theta}$ we use almost-identical ReLU-activation fully-connected Multi-Layer Perceptrons (MLPs), with the only difference being the additional input of $\bfy$ in the latter.
Explicit ``MSE'' models are trained with Mean Square Error (MSE), explicit ``MDN'' models are Mixture Density Networks (MDN) \cite{bishop1994mixture}, and implicit ``EBM'' models are trained with $\mathcal{L}_{\text{InfoNCE}}$ and optimized with derivative-free optimization.
Figs.~\ref{fig:curve_fitting},~\ref{fig:curve_fitting_multivalued} show models trained on a number of $\mathbb{R}^1 \rightarrow \mathbb{R}^1$ functions (Fig.~\ref{fig:curve_fitting}) and multi-valued functions (Fig.~\ref{fig:curve_fitting_multivalued}). For each of these we examine regions of discontinuities, multi-modalities, and/or extrapolation. 
%
%
\begin{figure}
\vspace{-0.5em}
\centering
\hspace*{-0.50cm} 
  \includegraphics[width=1.1\textwidth]{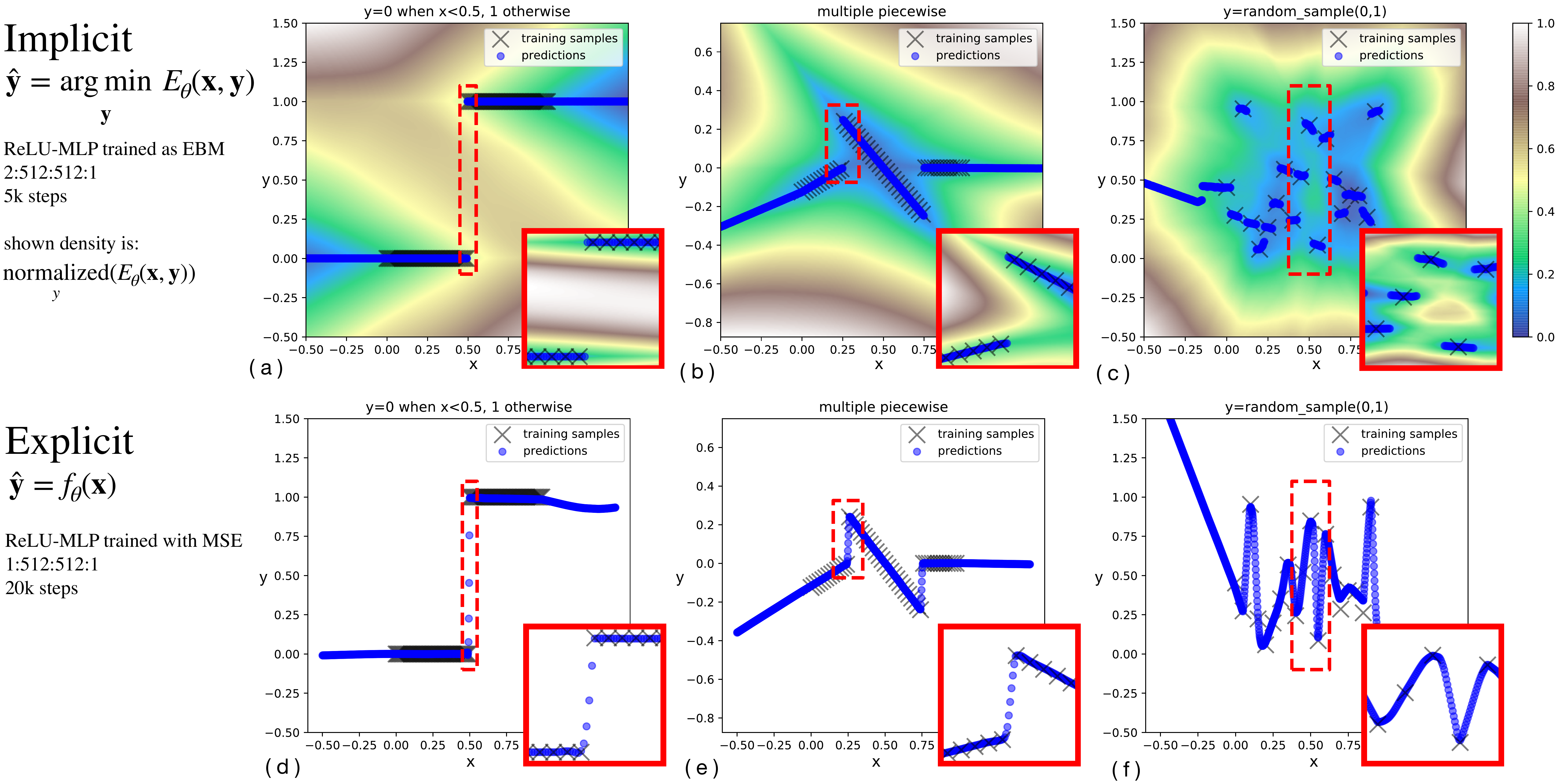}
  \vspace{-0.5cm}
  \caption{Comparison between implicit vs explicit learning of 1D functions, $\mathbb{R}^1 \rightarrow \mathbb{R}^1$, showing extrapolation (outside of $x=[0,1]$) behavior beyond training samples and detailed views (red insets) of interpolation behavior at discontinuities. (a,d) Single discontinuity between constant values; (b,e) piecewise continuous sections with differing $\frac{dy}{dx}$, (c,f) random Gaussian noise, for unregularized models.}
  \label{fig:curve_fitting}
\end{figure}

\textbf{Discontinuities.} Implicit models are able to approximate discontinuities sharply without introducing intermediate artifacts (Fig.~\ref{fig:curve_fitting}a), whereas explicit models (Fig.~\ref{fig:curve_fitting}d),
because they fit a continuous function to the data, take every intermediate value between training samples.
As the frequency of discontinuities increases, the implicit model predictions remain sharp at discontinuities, while also respecting local continuities, and with piece-wise linear extrapolations up to some decision boundary between training examples (Fig.~\ref{fig:curve_fitting}a-c). The explicit model interpolates across each discontinuity (Fig.~\ref{fig:curve_fitting}d-f). Once the training data is uncorrelated (i.e. random noise) and without regularization (Fig.~\ref{fig:curve_fitting}c, Fig.~\ref{fig:curve_fitting}f), implicit models exhibit a nearest-neighbors-like behavior, though with non-zero $\frac{\partial y}{\partial x}$ segments around each sample.

\textbf{Extrapolation.} For extrapolation outside the convex hull of the training data (Fig.~\ref{fig:curve_fitting}a-f), even with discontinuous or multi-valued functions, implicit models often perform piecewise linear extrapolation of the piecewise linear portion of the model nearest to the edge of the training data domain. Recent work \cite{xu2020neural} has shown that explicit models tend to perform linear extrapolation, but the analysis assumes the ground truth function is continuous.  \pete{Might want to mention here the score matching observation.}

\textbf{Multi-valued functions.} Instead of using $\arg\min$ to identify a single optimal value, $\arg\min$ may return a set of values, which may either be interpreted probabilistically as sampling likely values from the distribution, or in optimization as the {\em{set}} of minimizers ($\arg\min$ is set-valued).
\johnny{isn't argmin set-valued only if all the argmins are truly equally minimizing (e.g. cos(x))? But, that's not the case in these examples.  In practice, there is a single most minimum during evaluation}
Fig.~\ref{fig:curve_fitting_multivalued} compares a ReLU-MLP trained as a Mixture Density Network (MDN) vs an EBM across three example multi-valued functions.
\pete{We need a conclusion statement here too.}\johnny{proposal: The implicit model fits training the data more cleanly with less spurious predictions, and adapts to the number of modes without hyper parameters (Fig. 3a,b,d,e).  It is also able to maintain sharp boundaries between uniformly high and low probability regions of samples. (Fig 3c,f)}
%
%
\begin{figure}
\vspace{-1.0em}
\centering
\hspace*{-0.5cm} 
  \includegraphics[width=1.10\textwidth]{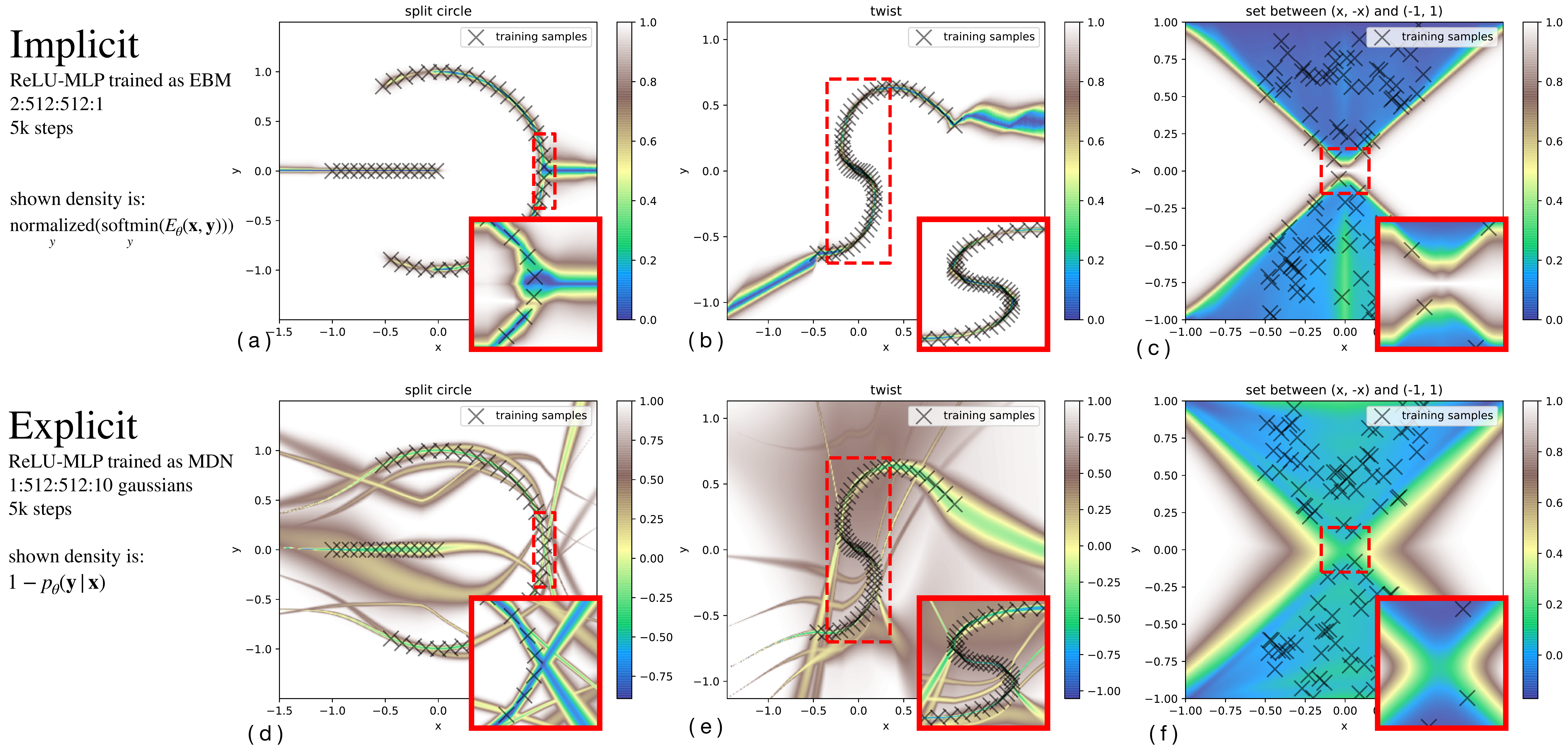}
  \vspace{-0.5cm}
  \caption{Representations of multi-valued functions showing extrapolations beyond the training samples (outside of shown `X' training samples) and detail views of notable regions. (a,d) Split circle with discontinuities and mode count changes; (b,e) locally continuous curve exhibiting hysteretic behavior, (c,f) set function of disjoint uniformly valid ranges.}
  \label{fig:curve_fitting_multivalued}
  \vspace{-1.0em}
\end{figure}

\textbf{Visual Generalization} Of particular relevance to learning visuomotor policies, we also find striking differences in extrapolation ability with converting high-dimensional image inputs into continuous outputs.  Fig.~\ref{fig:coord_regression} shows how on a simple visual coordinate regression task, which is a notoriously hard problem for convolutional networks \cite{liu2018intriguing}, an MSE-trained Conv-MLP model \cite{levine2016end} with CoordConv \cite{liu2018intriguing} struggles to extrapolate outside the convex hull of the training data. This is consistent with findings in \cite{florence2019self,zeng2020transporter}.  A Conv-MLP trained via late fusion (Fig.~\ref{fig:coord_regression}b) as an EBM, on the other hand, extrapolates well with only a few training data samples, achieving 1 to 2 orders of magnitude lower test-set error in the low-data regime (Fig.~\ref{fig:coord_regression}d). This is additional evidence that distinguishes implicit models from explicit models in a distinct way from multi-modality, which is absent in this experiment.  

\begin{figure}[H]
\centering
\hspace*{-0.3cm} 
  \includegraphics[width=1.03\textwidth]{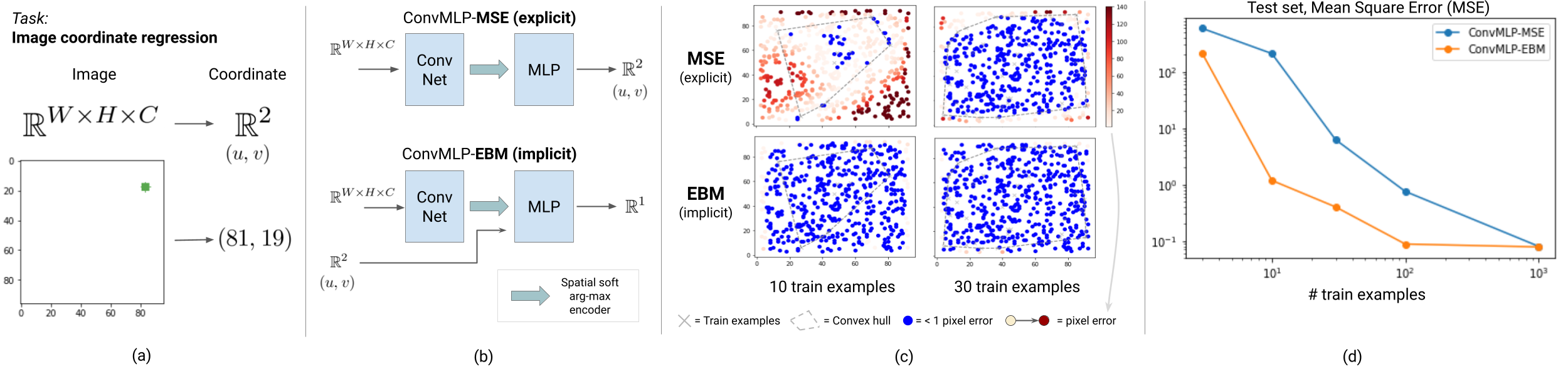}
  \caption{Comparison of implicit and explicit ConvMLP models on a simple coordinate regression task \cite{liu2018intriguing}, $\mathbb{R}^{W \times H \times C} \rightarrow \mathbb{R}^2$ (a).  The architectures shown in (b) are trained on images (example in a) to regress the $(u,v)$ coordinate of a green few-pixel dot.  The {\em{spatial generalization}} plot (c) shows the convex hull (gray dotted line) of the training data and shows that with only 10 training examples, the MSE-trained models struggle both to interpolate and extrapolate (c, top left).  At 30 train examples (c, top right) it can reasonably interpolate, but still struggles with extrapolation. ConvMLP-EBM, instead (c,bottom) performs well with little data, with 1 to 2 orders of magnitude lower test-set MSE loss (d) in the low-data regime.}
  \label{fig:coord_regression}
  \vspace{-0em}
\end{figure}




\section{Policy Learning Results}\label{sec:policy-results}
\begin{figure}[H]
\begin{minipage}{1.0\textwidth}
\centering
\includegraphics[width=1.0\textwidth]{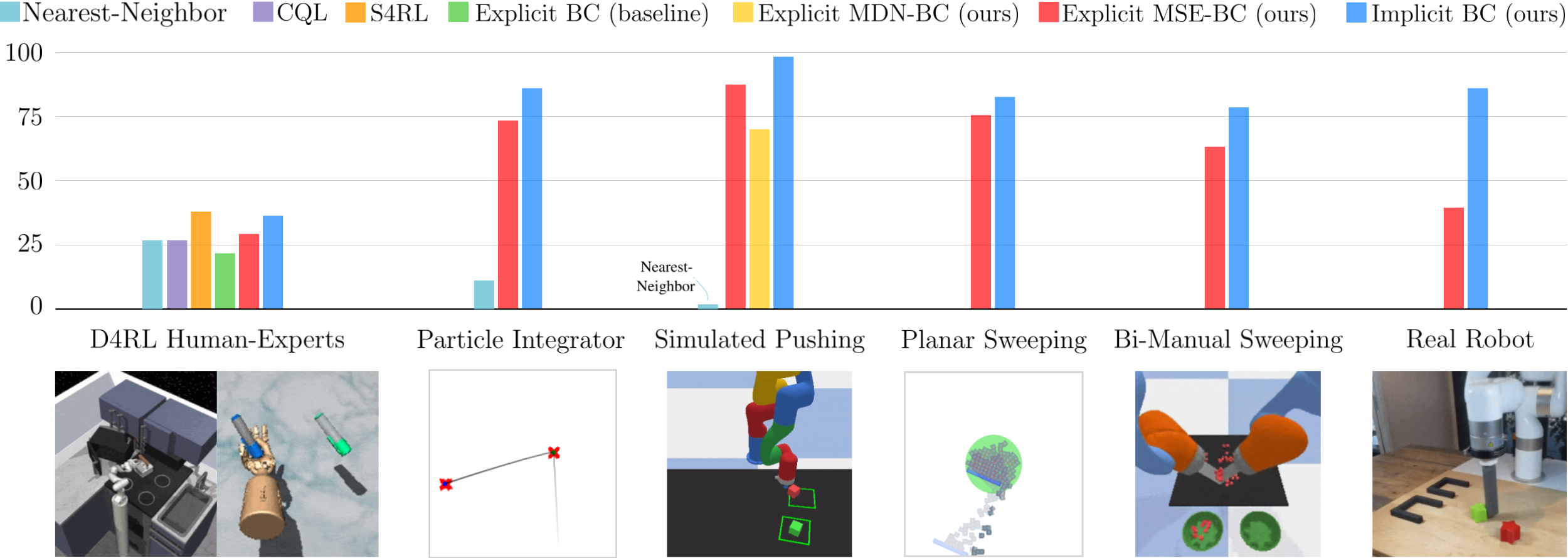}
\caption{Comparisons between implicit and explicit policies across 6 various simulated and real domains (Table \ref{table:task-attributes}), including author-reported baselines on the human-expert D4RL tasks.  See Appendix for full experimental protocol.  Standard deviations are shown in Tables 2, 3, 4, 5, 6.}\label{fig:policy_learning_tasks_bar_chart}
\end{minipage}
\vspace{-1.5em}
\end{figure}

\begin{wraptable}{r}{0.46\textwidth}
  \setlength\tabcolsep{2.3pt}
  \vspace{-0.8em}
  \centering
  \scriptsize
  \begin{tabular}{@{}lccccc}
  \toprule
  & image & human & unknown & multimodal \\
  Benchmark & input & demos & cardinality & solutions \\
  \midrule
  D4RL Human-Experts    & \xmark & \cmark & \xmark & \xmark \\
  Particle Integrator   & \xmark & \xmark & \xmark & \xmark \\
  Block Pushing         & \cmark & \xmark & \xmark & \cmark \\
  Planar Sweeping       & \cmark & \cmark & \cmark & \cmark \\
  Bi-Manual Sweeping    & \cmark & \xmark & \cmark & \cmark \\
  Real Robot            & \cmark & \cmark & \xmark & \cmark \\
  \bottomrule
  \end{tabular}
  \vspace{0.1em}
  \caption{\scriptsize Each benchmark is characterized by a unique set of attributes.}
  \vspace{-2.0em}
  \label{table:task-attributes}
\end{wraptable}

We evaluate implicit models for learning BC policies across a variety of robotic task domains (Fig. \ref{fig:policy_learning_tasks_bar_chart}). 
The goals of our experiments are three-fold: (i) to compare the performance of otherwise-identical policies when represented as either implicit or explicit models
, (ii) to test how well our models (both implicit and explicit) compare with author-reported baselines on a standard set of tasks, and (iii) to demonstrate that implicit models can be used to learn effective policies from human demonstrations with visual observations on a real robot. The following results and discussions are organized by task domain -- each evaluating a unique set of desired properties for policy learning (Table \ref{table:task-attributes}). All tasks are characterized by discontinuities and require generalization (\eg extrapolation) to some degree. 



\textbf{D4RL \cite{fu2020d4rl}} is a recent benchmark for offline reinforcement learning. We evaluate our implicit (EBM) and explicit (MSE) policies across the subset of tasks for which offline datasets of human demonstrations are provided, which is arguably is the hardest set of tasks. 
Surprisingly, we find that our implementations of both implicit and explicit policies significantly outperform the BC baselines reported on the benchmark, and provide competitive results with state-of-the-art offline reinforcement learning results reported thus far, including CQL \cite{kumar2020conservative} and S4RL \cite{sinha2021s4rl}.  By adding perhaps the simplest way to use reward information, if we prioritize sampling to be only the top 50\% of demonstrations sorted by their returns (similar to Reward-Weighted Regression (RWR) \cite{peters2007reinforcement}), this intriguingly generally improves implicit policies, in some cases to new state-of-the-art performance, while less so for explicit models.  This suggests that implicit BC policies value data quality higher than explicit BC policies do.  A simple Nearest-Neighbor baseline (see Appendix) performs better than one might expect on these tasks, but on average not as well as implicit BC.  

\begin{table}
  \centering
  \tiny
  \setlength\tabcolsep{5.3pt}
  \begin{tabular}{@{}llcccccccccc@{}}
  \toprule
    & & \multicolumn{4}{c}{Baselines} & \multicolumn{4}{c}{Ours} \\
    \cmidrule(lr){3-6} \cmidrule(lr){7-10}
         &  & &              &     &      & {\em{Explicit}} & {\em{Implicit}} & {\em{Explicit}} & {\em{Implicit}}  \\
  Method &  & Nearest- & BC           & CQL \cite{kumar2020conservative} & S4RL \cite{sinha2021s4rl} & BC (MSE) & BC (EBM) & BC (MSE) & BC (EBM)  \\
         &  & Neighbor & (from CQL \cite{kumar2020conservative})   &     &      &        &        & w/ RWR \cite{peters2007reinforcement} & w/ RWR \cite{peters2007reinforcement} \\
  \midrule
  Uses data & & $(\mathbf{o}, \mathbf{a})$& $(\mathbf{o}, \mathbf{a})$ & $(\mathbf{o}, \mathbf{a}, r)$ & $(\mathbf{o}, \mathbf{a}, r)$ & $(\mathbf{o}, \mathbf{a})$ & $(\mathbf{o}, \mathbf{a})$ & $(\mathbf{o}, \mathbf{a}, r)$ & $(\mathbf{o}, \mathbf{a}, r)$\\
  \midrule
  \emph{Domain} & \emph{Task Name} \\
  \hline
  \hline
  \multirow{3}{*}{Franka} & kitchen-complete & 1.92 $\pm$0.00 & 1.4 & 1.8 & 3.08 & 1.76 $\pm$0.04 & \textbf{3.37} $\pm$0.19 & 1.22 $\pm$0.18 & \textbf{3.37} $\pm$0.01 \\  
                          & kitchen-partial  & 1.70 $\pm$0.00 & 1.4 & 1.9 & \textbf{2.99} & 1.69 $\pm$0.02 &  1.45 $\pm$0.35 & 1.86 $\pm$0.26 & 2.18 $\pm$0.05 \\
                          & kitchen-mixed    & 1.46 $\pm$0.00 & 1.9 & 2.0 &               & \textbf{2.15} $\pm$\textbf{0.06} &  1.51 $\pm$0.39 & 2.03 $\pm$0.06 & \textbf{2.25} $\pm$\textbf{0.14} \\
  \hline
  \hline
  \multirow{5}{*}{Adroit} & pen-human           & 1908.0 $\pm$0.0 & 1121.9 & 1214.0 & 1419.6 & 2141 $\pm$109 & \textbf{2586} $\pm$\textbf{65} & 2108 $\pm$58.8 & \textbf{2446} $\pm$\textbf{207}\\
                          & hammer-human        & -85.2  $\pm$0.0 & -82.4  & 300.2  &  \textbf{496.2} & -38 $\pm$25 & -133 $\pm$26  & -35.1 $\pm$45.1 & -9.3  $\pm$45.5\\
                          & door-human          & 91.8  $\pm$0.0 & -41.7  & 234.3  &  \textbf{736.5} & 79 $\pm$15 & 361 $\pm$67  & 17.9 $\pm$ 13.8 & 399 $\pm$34\\
                          & relocate-human      & -3.8 $\pm$0.0 & -5.6   & 2.0    &    2.1 & -3.5 $\pm$1.1 & -0.1 $\pm$2.4 & -3.7 $\pm$0.3 & $\textbf{3.6} \pm\textbf{2.5}$\\
  \bottomrule
  \end{tabular}
  \caption{\scriptsize Baseline comparisons on D4RL \cite{fu2020d4rl} tasks with human-expert data.  Results shown are the average of 3 random seeds, 100 evaluations each, with $\pm$ std. dev.  Baselines from \cite{kumar2020conservative} and \cite{sinha2021s4rl} didn't report standard deviations. See Appendix for more on experimental protocol.}
  \label{table:d4rl-table}
  \vspace{-2.5em}
\end{table}

While many of the D4RL tasks have complex high-dimensional action spaces (up to 30-D), they do not emphasize the full spectrum of task attributes (Table~\ref{table:task-attributes}) we are interested in.  The following tasks isolate other attributes or introduce new ones, such as highly stochastic dynamics (i.e., single-point-of-contact block pushing), complex multi-object interactions (many small particles), and combinatorial complexity.

\begin{wrapfigure}{r}{0.3\textwidth}
  \vspace{-1.8em}
  \begin{center}
    \includegraphics[width=0.3\textwidth]{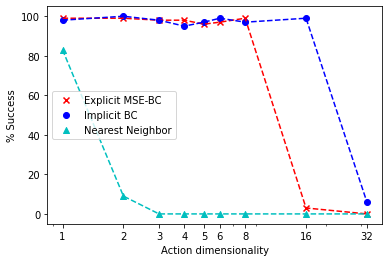}
  \end{center}
  \vspace{-1em}
  \caption{Comparison of policy performance on the $N$-D particle environment,  2,000 demonstrations each.}
  \label{fig:particle-results}
  \vspace{-1.0em}
\end{wrapfigure}

\textbf{N-D Particle Integrator} is a simple environment with linear dynamics but where a discontinuous oracle policy is used to generate training demonstrations: once within the vicinity of goal-conditioned location (Fig.~\ref{fig:policy_learning_tasks_bar_chart}, shown for $N=2$), the policy must switch to the second goal.  The benefit of studying this environment is two-fold: (i) it has none of the complicating attributes in Table~\ref{table:task-attributes} and so allows us to study discontinuities in isolation, and (ii) we can define this simple environment to be in $N$ dimensions.  Varying $N$ from 1 to 32 dimensions, but holding the number of demonstrations constant, we find we are able to train 95\% successful implicit policies up to 16 dimensions, whereas explicit (MSE) policies can only do 8 dimensions with the same success rate.  The Nearest-Neighbor baseline, meanwhile, cannot generalize, and only performs well on the 1D task (see Appendix for more analysis).


\begin{wraptable}{r}{0.40\textwidth}
  \setlength\tabcolsep{2.3pt}
  \vspace{-0.8em}
  \centering
  \tiny
  \begin{tabular}[b]{@{}lccc@{}}
  \toprule
  Method & Single Target, & Multi Target, & Single Target, \\
         & states         & states        & pixels \\
  \midrule
  EBM                & \textbf{100} $\pm$\textbf{0} & 99.0 $\pm$0.0 & \textbf{100} $\pm$\textbf{0}\\
  MDN                & \textbf{100} $\pm$\textbf{0} & \textbf{99.7} $\pm$\textbf{0.5} & 10.0 $\pm$4.3\\
  MSE                & 98.3 $\pm$0.5 & 89.7 $\pm$4.8 & 87.0 $\pm$4.1\\
  Nearest-Neighbor   & 4.0 $\pm$0.0 & 0.0 $\pm$0.0 & 4.3 $\pm$1.9 \\
 
  \bottomrule
  \end{tabular}
  \vspace{0.3em}
  \caption{\scriptsize Results on simulated xArm6 pushing tasks, average of 3 random seeds, 100 evaluations each, with $\pm$ std. dev.}
  \vspace{-2.0em}
  \label{table:sim-pushing}
\end{wraptable}

\textbf{Simulated Pushing} consists of a simulated 6DoF robot xArm6 in PyBullet \cite{coumans2016pybullet} equipped with a small cylindrical end effector. The task is to push a block into the target goal zone, marked by a green square labeled on the tabletop. We investigate 2 variants: (a) pushing a single block to a single target zone, or (b) also pushing the block to a second goal zone (multistage). We evaluate implicit (EBM) and explicit (MSE and MDN \cite{rahmatizadeh2018vision,ha2018world}) policies on both variants, trained from a dataset of 2,000 demonstrations using a scripted policy that readjusts its pushing direction if the block slips from the end effector. Results in Table~\ref{table:sim-pushing} show that all learning methods perform well on the single-target task, while MSE struggles with the slightly longer task horizon.  For the image-based task, the MDN significantly struggles compared to MSE and EBM.  The failures of the Nearest-Neighbor baseline, with only 0-4\% success rate, show that generalization is required for this task.


\textbf{Planar Sweeping} \cite{suh2020surprising} is a 2D environment that consists of an agent (in the form of a blue stick) where the task is to push a pile of 50 - 100 randomly positioned particles into a green goal zone. The agent has 3 degrees of freedom (2 for position, 1 for orientation). We train implicit (EBM) and explicit (MSE) policies from 50 teleoperated human demonstrations, and test on episodes with unseen particle configurations. For the image-based inputs, we also test two types of encoders with different forms of dimensionality reduction: spatial soft(arg)max and average pooling over dense features (see Appendix for architecture descriptions). For the state-based inputs, since the number of particles vary between episodes, we flatten the poses of the particles and 0-pad the vector to match the size of the vector at maximum particle cardinality.

\begin{wrapfigure}{r}{0.64\textwidth}
  \vspace{-1.3em}
  \centering
  \includegraphics[width=0.215\textwidth]{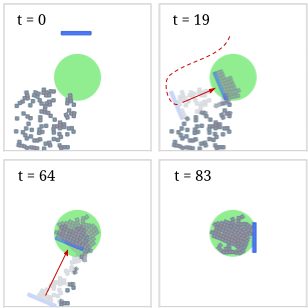}
  \quad
  \setlength\tabcolsep{2.3pt}
  \tiny
  \begin{tabular}[b]{@{}lcccccccc@{}}
  \toprule
  & & \multicolumn{3}{c}{\# ResNet layers} \\
  \cmidrule(lr){3-5}
  Method & Input \& Encoder & 8 & 14 & 20 \\
  \midrule
  EBM                & image + softmax & 78.7 $\pm$4.9 & 82.1 $\pm$0.9 & \textbf{82.6} $\pm$\textbf{3.1} \\
  EBM                & image + pool & 78.0 $\pm$2.2 & 76.5 $\pm$1.0 & 74.2 $\pm$1.9 \\
  EBM                & state & 28.7 $\pm$0.8 & 29.2 $\pm$0.5 & 28.9 $\pm$0.2 \\
  \midrule
  MSE                & image + softmax & 62.9 $\pm$5.0 & 51.4 $\pm$8.9 & 56.6 $\pm$5.2 \\
  MSE                & image + pool & 75.6 $\pm$1.3 & 73.9 $\pm$1.7 & 74.8 $\pm$1.2 \\
  MSE                & state & 28.9 $\pm$0.2 & 28.2 $\pm$0.4 & 27.8 $\pm$0.3 \\
      \bottomrule
  \end{tabular}
  \captionlistentry[table]{A table beside a figure}
  \captionsetup{labelformat=andtable}
  \caption{Image-based implicit (EBM) policies outperform explicit (MSE) ones in learning to control the agent (blue) to sweep an unknown number of particles (gray) into a target goal zone (green).  Trained on 50 human demonstrations.}
  \vspace{-2em}
\end{wrapfigure}


The results in Table 4 (averaged over 3 training runs with different seeds) suggest that image-based EBMs outperform the best MSE architectures by 7\%. Interestingly, image-based EBMs seem to synergize well with spatial soft(arg)max for dimensionality reduction, as opposed to pooling, which works best for MSE explicit policies. 
In both cases, state observations as inputs do not perform well compared with image pixel inputs. This is likely because the particles have symmetries in image space, but not when observed as a vector of poses.

\textbf{Simulated Bi-Manual Sweeping} consists of two robot KUKA IIWA arms equipped with spatula-like end effectors. The task is to scoop up randomly configured particles from a $0.4m^2$ workspace and transport them into two bowls, which should be filled up equally. Successfully scooping particles and transporting them requires precise coordination between the two arms (\eg such that the particles do not drop while being transported to the bowls). The action space is 12DoF (6DoF Cartesian per arm), and each episode consists of 700 steps recorded at 10Hz. Perspective RGB images from a simulated camera are used as visual input, along with current end effector poses as state input. The task is characterized by many mode changes and discontinuities (transitioning from scooping to lifting, from lifting to transporting, and deciding which bowl to transport to). EBM and MSE policies on the task use the best corresponding image encoder from the planar sweeping task. As shown in Table 5, our results show that EBM outperforms MSE by 14\%.

\begin{figure}[!ht]
  \centering
  \includegraphics[width=0.65\textwidth]{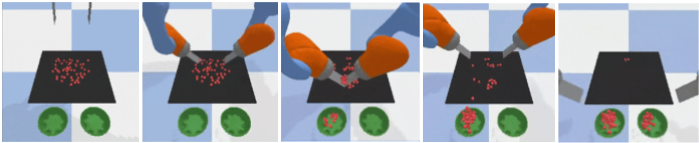}
  \quad
  \setlength\tabcolsep{2.3pt}
  \scriptsize
  \begin{tabular}[b]{@{}lccc@{}}
  \toprule
  Method & Input and Encoder & Success \% \\
  \midrule
  EBM                & image + softmax & \textbf{78.2} $\pm$2.7 \\
  MSE                & image + pool & 63.9 $\pm$7.7 \\
  \bottomrule
  \end{tabular}
  \captionlistentry[table]{A table beside a figure}
  \captionsetup{labelformat=andtable}
  \caption{Image-based implicit (EBM) policies outperform explicit (MSE) ones in learning to control two robot arms (6DoF + 6DoF) with spatula-like end effectors to scoop up particles (red) from a workspace and equally distribute them across two bowls (green). Success \% is the average ratio of particles successfully moved into the bowls across 10 rollouts over 3 different model seeds. Trained on 1,000 scripted demonstrations.}
  \vspace{-1em}

\end{figure}


\textbf{Real Robot Manipulation}, using a cylindrical end-effector on an xArm6 robot (Fig.~\ref{fig:hardware}a), we evaluate implicit BC and explicit BC policies on 4 real-world manipulation pushing tasks: 1) pushing a red block and a green block into assigned target fixtures, 2) pushing the red and green blocks into either target fixture, in either order, 3) precise pushing and insertion of a blue block into a tight (1mm tolerance) target fixture, and 4) sortation of 4 blue blocks and 4 yellow into different targets. The observation input is only raw perspective RGB images at 5Hz, with task horizons up to 60 seconds, and teleoperated demonstrations.


\begin{table}[H]
  \centering
  \scriptsize
  \begin{tabular}{@{}lcccccccccc@{}}
  \toprule
    Task & \multicolumn{1}{c}{Push-Red-then-Green} & \multicolumn{1}{c}{Push-Red/Green-Multimodal}  & \multicolumn{1}{c}{Insert-Blue} & \multicolumn{1}{c}{Sort-Blue-from-Yellow}  \\
     \cmidrule(lr){2-2} \cmidrule(lr){3-3} \cmidrule(lr){4-4} \cmidrule(lr){5-5}
  \# demos &  95  & 410 & 223 & 502 \\ \hline 
  Avg. lengths $\pm$ std. & 19.1 $\pm$2.5  & 19.0 $\pm$3.1  & 22.1 $\pm$5.5  & 45.2 $\pm$8.2  \\ 
  {[min, max] (seconds)} & [14.2, 25.1] & [11.8, 28.1] & [13.0, 43.5] & [25.8, 60.5] \\
  \midrule
  Success criterion & 1.0 if both blocks in target & 1.0 if both blocks in target        & \begin{tabular}[c]{@{}r@{}}0.5 for partial insert\\ 1.0 for full insert\end{tabular} & $\frac{1}{8}$ for each correct block in target        \\ \hline
  \emph{Success avg. (\%)}\\
Implicit BC (EBM) & \textbf{85.0} $\pm$\textbf{5.0}        & \textbf{88.3} $\pm$\textbf{7.6}               & \textbf{83.3} $\pm$\textbf{3.8}                                                                & \textbf{48.3} $\pm$\textbf{4.6}       \\   
Explicit BC (MSE) & 35.0 $\pm$18.0                & 55.0 $\pm$18.0                       & 6.7 $\pm$9.4                                                                          & 19.6 $\pm$1.5                              \\
  \bottomrule
  \end{tabular}
  \caption{\scriptsize Real-world robot results, success \% shown is mean +/- std.dev (20 rollouts per seed, 3 seeds = 60 trials per method per task).}
  \label{table:real-hardware-table}
\end{table}

\begin{figure}
\centering
  \vspace{-1em}
   \includegraphics[width=1.0\textwidth]{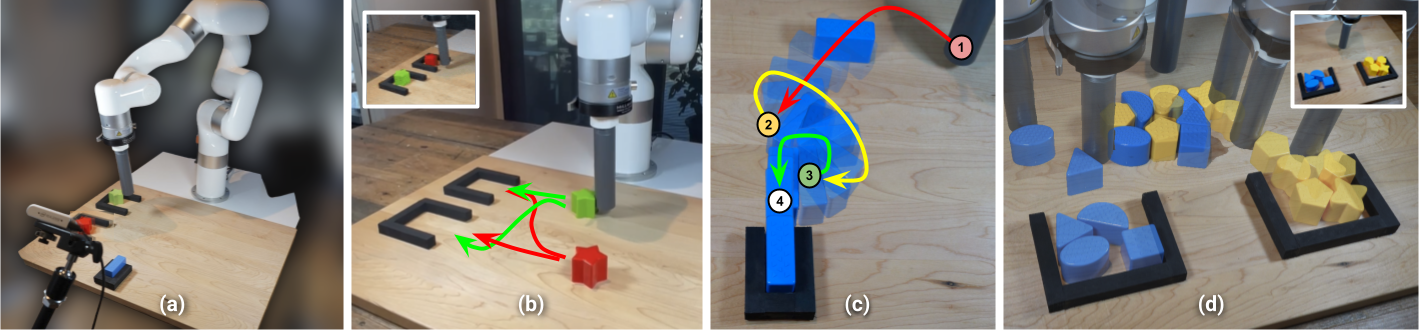}
  \caption{Results using our hardware configuration (a, see Appendix for full description) on real-world visual manipulation tasks, including (b) multi-modal targeted block pushing, (c) precise oriented insertion requiring 1mm precision, and (d) a combinatorially complex sorting task.}
  \label{fig:hardware}
  \vspace{-1em}
\end{figure}

Across all four tasks, we observe significantly higher performance for the implicit policies compared to the explicit baseline.  This is especially apparent on the pushing-and-oriented-insertion task ({\em{Insert Blue}}), which requires highly discontinuous behavior in order to subtly nudge enough, but not too far, the block into place (Fig.~\ref{fig:hardware}c). On this task we see the implicit BC policy has an {\em{order of magnitude}} higher success rate than the explicit BC policy.  The sorting task in particular ({\em{Sort-Blue-From-Yellow}}, Fig.~\ref{fig:hardware}d) is our attempt to push the generalization abilities of our models, and we see a 2.4x higher success rate for the implicit policy.  Note these experimental results are averaged over 3 different models, for each task, for each policy type.  The red/green pushing tasks, including multi-modal variant (Fig.~\ref{fig:hardware}b), also show notably higher success rates for the implicit policies.  These real-world results are best appreciated in our video.




\section{Theoretical Insight: Universal Approximation with Implicit Models}\label{sec:theory}

In previous sections, we have empirically demonstrated the ability of implicit models to handle discontinuities (Section~\ref{sec:empirical-properties}), and we hypothesized this is one of the reasons for the strong performance of implicit BC policies (Section~\ref{sec:policy-results}).  Two theoretical questions we now ask are: (i) is there a provable notion for \emph{what class of functions} can be represented by implicit models given some analytical $E(\cdot)$, and (ii) given that energy functions learned from data may always be expected to have non-zero error of approximating any function, are there inference risks with large behaviour shifts resulting from a combination of $\arg\min$ and spurious peaks in $E(\cdot)$? Recent work \cite{marx2021semi} has shown that a large class of functions (namely, functions defined by finitely many polynomial inequalities) can be approximated implicitly by $\arg\min_\bfy g(\bfx, \bfy)$ using SOS polynomials to represent $g(\cdot)$.  Here we show that for implicit models with $g_{\theta}$ represented by any continuous function approximator (such as a deep ReLU-MLP network), $\arg\min_\bfy \ g_{\theta}(\bfx, \bfy)$ can represent a larger set of functions including multi-valued functions and discontinuous functions (Thm.~\ref{universal-rep}), to arbitrary accuracy (Thm.~\ref{universal-approximation}).  These results are stated formally in the following; proofs are in the Appendix.

\begin{Theorem}\label{universal-rep}
For any set-valued function $F(\bfx):\ \bfx \in \Rm \rightarrow P(\mathbb{R}^n) \setminus \{ \emptyset \}$ where the graph of $F$ is closed,
there exists a continuous function $g(\bfx, \bfy): \ \Rmn \rightarrow \Rone$,
such that $\underset{\bfy}{\arg\min} \ g(\bfx, \bfy) = F(\bfx)$ for all $\bfx$.
\end{Theorem}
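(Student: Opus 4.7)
The plan is to construct $g$ explicitly as the Euclidean distance from $(\bfx,\bfy)$ to the graph of $F$. Concretely, let $\Gamma = \{(\bfx',\bfy') \in \Rmn : \bfy' \in F(\bfx')\}$, the graph of $F$, and define
\begin{equation*}
g(\bfx, \bfy) \;=\; d\bigl((\bfx,\bfy),\, \Gamma\bigr) \;=\; \inf_{(\bfx',\bfy')\in\Gamma}\, \|(\bfx,\bfy)-(\bfx',\bfy')\|_2.
\end{equation*}
The infimum is over a nonempty set (since $F(\bfx')\neq\emptyset$ for every $\bfx'$), so $g$ is well-defined on all of $\Rmn$.

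First I would verify continuity: the distance function to any nonempty subset of a metric space is $1$-Lipschitz, and hence continuous, without any assumption on the set. Next, for each fixed $\bfx \in \Rm$, I need to identify $\arg\min_\bfy g(\bfx,\bfy)$. If $\bfy \in F(\bfx)$, then $(\bfx,\bfy)\in\Gamma$, so $g(\bfx,\bfy)=0$; since $g \ge 0$ everywhere and $F(\bfx)$ is nonempty, the minimum value of $g(\bfx,\cdot)$ is exactly $0$. Conversely, if $\bfy \notin F(\bfx)$, then $(\bfx,\bfy)\notin\Gamma$, and because $\Gamma$ is closed by hypothesis, its complement is open, so there is an open ball around $(\bfx,\bfy)$ disjoint from $\Gamma$, giving $g(\bfx,\bfy)>0$. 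Combining the two cases,
\begin{equation*}
\underset{\bfy}{\arg\min}\; g(\bfx,\bfy) \;=\; \{\bfy : g(\bfx,\bfy)=0\} \;=\; \{\bfy : (\bfx,\bfy)\in\Gamma\} \;=\; F(\bfx),
\end{equation*}
for every $\bfx$, as required.

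The only delicate step is the converse direction of the argmin characterization: it relies essentially on the closedness of $\Gamma$ to conclude $g(\bfx,\bfy)>0$ outside $F(\bfx)$. Without that hypothesis, points in $\overline{\Gamma}\setminus\Gamma$ would also satisfy $g=0$ and corrupt the argmin; so the ``graph of $F$ is closed'' assumption is doing exactly the work one would expect, and no further structural assumptions on $F$ (single-valued, lower- or upper-hemicontinuous, etc.) are needed. I do not anticipate real obstacles beyond this; the construction is essentially a distance-function trick, and a complementary approximation theorem (Thm.~\ref{universal-approximation}) can then be invoked to replace this analytic $g$ with a ReLU-MLP $g_\theta$ that approximates it to arbitrary accuracy on any compact set.
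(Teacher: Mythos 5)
Your proposal is correct and matches the paper's own proof essentially exactly: both define $g$ as the Euclidean distance to the graph of $F$, use the $1$-Lipschitz property of distance functions for continuity, and use closedness of the graph to get strict positivity off the graph. The only cosmetic difference is that the paper establishes positivity by showing the infimum to a closed set is attained (via a compactness lemma) and then bounding the distance to the attaining point, whereas you use the slightly more direct observation that the complement of the closed graph is open, so a ball of positive radius separates $(\bfx,\bfy)$ from $\Gamma$; both are valid.
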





\begin{figure}[H]
\centering
  \vspace{-1em}
  \includegraphics[width=1.0\textwidth]{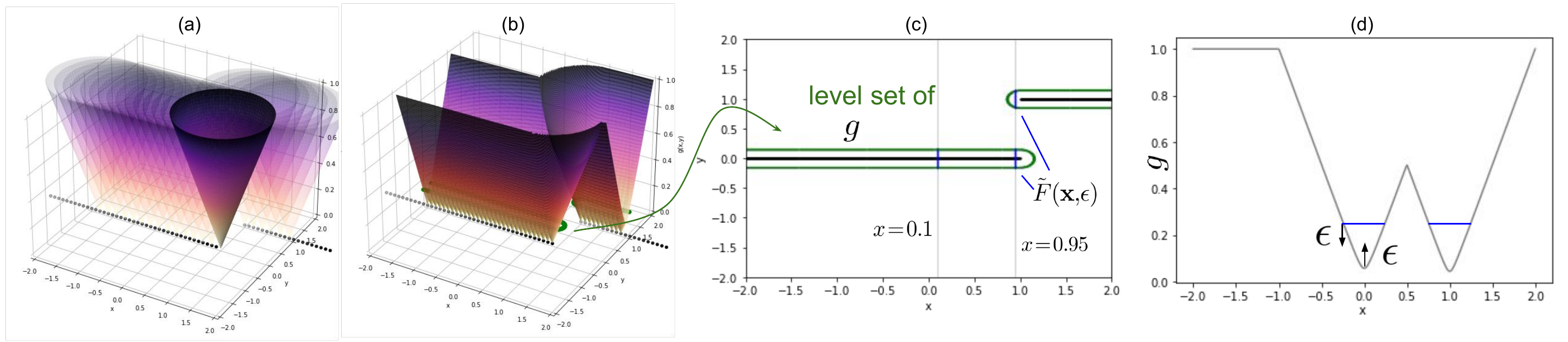}
  \caption{Visual explanation of the results presented in  Thms.~\ref{universal-rep} and Thms.~\ref{universal-approximation}, the construction of a continuous function $g(x,y)$ for which $\text{argmin}_y \ g(x,y)$ yields $f(x) = \{\{1,0\} \  \text{if}  \ x = 1, 1\  \text{if} \ x > 1, \ 0 \ \text{otherwise}\}$.  The function $g(\cdot)$ (b) is the minimum distance to the graph of $f()$, for example the infimum over a set of cones (a). The approximation guarantee (Thm.~\ref{universal-approximation}) can be visualized via the level-sets of $g(\cdot)$ (b,c), and a slice (d) of $g(\cdot)$.  For more explanation, see the Appendix.}.
  \label{fig:inf-over-cones}
  \vspace{-1em}
\end{figure}
%
%
\begin{Theorem}\label{universal-approximation}
For any set-valued function $F(\bfx):\ \bfx \in \Rm \rightarrow P(\mathbb{R}^n) \setminus \{ \emptyset \}$, there exists a function $g(\cdot)$ that can be approximated by some continuous function approximator $g_{\theta}(\cdot)$ with arbitrarily small bounded error $\epsilon$,
such that $\hat{\bfy} 
= \underset{\bfy}{\text{argmin}} \ g_{\theta}(\bfx, \bfy)$ provides the guarantee that the distance from 
$(\bfx, \hat\bfy)$ to the graph of $F$ is less than $\epsilon$.
\end{Theorem}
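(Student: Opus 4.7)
The plan is to bootstrap on Theorem~\ref{universal-rep} by selecting a particular continuous witness $g$ whose sublevel-set geometry interacts well with approximation error, and then applying a standard universal approximation result for the chosen function class (for ReLU-MLPs, or any class dense in $C(\mathbb{R}^{m+n})$ on compact sets) to obtain a $g_\theta$ uniformly close to $g$. The essential idea is to pick $g$ to be the \emph{distance} to the graph of $F$ (as hinted by Fig.~\ref{fig:inf-over-cones}), so that a uniform $\epsilon/2$-bound on $|g_\theta - g|$ automatically converts into an $\epsilon$-bound on how far any minimizer of $g_\theta$ can drift from $\operatorname{graph}(F)$.

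First I would fix a compact region $K \subset \mathbb{R}^{m+n}$ containing the queries and action set of interest, and define
$$
g(\bfx,\bfy) \;:=\; \inf_{(\bfx',\bfy') \in \operatorname{graph}(F)} \bigl\| (\bfx,\bfy) - (\bfx',\bfy') \bigr\|.
$$
Because $\operatorname{graph}(F)$ is closed, $g$ is continuous (indeed $1$-Lipschitz), its zero-set is exactly $\operatorname{graph}(F)$, and therefore $\arg\min_\bfy g(\bfx,\bfy) = F(\bfx)$, so $g$ is a valid witness for Theorem~\ref{universal-rep}. Invoking universal approximation on $K$ yields $g_\theta$ with $\sup_{(\bfx,\bfy)\in K} |g_\theta(\bfx,\bfy) - g(\bfx,\bfy)| \le \epsilon/2$.

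The payoff is a three-line chain. Let $\hat\bfy \in \arg\min_\bfy g_\theta(\bfx,\bfy)$ (taken over a compact action set $\mathcal{A}$ so the minimum is attained) and pick any $\bfy^* \in F(\bfx)$, so that $g(\bfx,\bfy^*) = 0$. Then
$$
g(\bfx,\hat\bfy) \;\le\; g_\theta(\bfx,\hat\bfy) + \tfrac{\epsilon}{2} \;\le\; g_\theta(\bfx,\bfy^*) + \tfrac{\epsilon}{2} \;\le\; g(\bfx,\bfy^*) + \epsilon \;=\; \epsilon,
$$
which is exactly the stated guarantee, because the left-hand side is by construction the distance from $(\bfx,\hat\bfy)$ to $\operatorname{graph}(F)$.

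The main obstacles are technical rather than conceptual. (i) Universal approximation is a compact-domain statement, so the bound holds on whatever bounded region we fix, not globally over $\mathbb{R}^{m+n}$; this matches the paper's inference setup. (ii) We must ensure $\arg\min_\bfy g_\theta$ exists, which is handled by optimizing over a compact $\mathcal{A}$. (iii) The most important subtlety is that we must use the specific \emph{distance-function} witness: an arbitrary continuous $g$ with $\arg\min_\bfy g(\bfx,\cdot) = F(\bfx)$—for instance one with very flat basins around $\operatorname{graph}(F)$—would not convert a uniform approximation bound into a distance bound on $\hat\bfy$, since tiny perturbations of a flat energy can shift the argmin arbitrarily far. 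The real content of Theorem~\ref{universal-approximation} is therefore the choice of energy: the distance-to-graph energy is canonical in that its $1$-Lipschitz character is precisely what lets $\arg\min$ inherit robustness to function-approximation error.
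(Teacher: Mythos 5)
Your proposal is correct and is essentially the paper's own argument: both use the distance-to-graph function as the witness $g$, invoke universal approximation of continuous functions, and run the same three-step chain through an arbitrary $\bfy^* \in F(\bfx)$ where $g$ vanishes (the paper merely places the factor of two in the definition $g = 2\,d_{\mathcal{G}_F}$ rather than in the tolerance $\epsilon/2$, which is cosmetic). Your added remarks on compactness of the domain, attainment of the $\arg\min$, and why a non-distance witness would fail are sound refinements of the same proof rather than a different route.
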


Of practical note, explicit functions ($F(\bfx)$ in Thms.~\ref{universal-rep} and ~\ref{universal-approximation})
with arbitrarily small or large Lipschitz constants can be approximated by an implicit function with bounded Lipschitz constant (see Appendix for more discussion).
This means that implicit functions can approximate steep or discontinuous explicit functions without large gradients in the function approximator that may cause generalization issues.  This is not the case for explicit continuous function approximators, which must match the large gradient of the approximated function.
In both their multi-valued nature and discontinuity-handling, the approximation capabilities of implicit models are distinctly superior to explicit models.
See Fig.~\ref{fig:inf-over-cones} for visual intuition, and more discussion in the Appendix.  






\section{Related Work}\label{sec:related}

\textbf{Energy-Based Models, Implicit Learning.}  Reviews of energy-based models can be found in LeCun et al. \cite{lecun2006tutorial} and Song \& Kingma \cite{song2021train}.  Du \& Mordatch \cite{du2019implicit} proposed Langevin MCMC \cite{welling2011bayesian} sampling for training and implicit inference, and argued for several strengths of implicit generation, including compositionality and empirical results such as out-of-distribution generalization and long-horizon sequential prediction.  A general framework for energy-based learning of behaviors is also presented in \cite{mordatch2018concept}.  In applications, energy based models have recently shown state-of-the-art results across a number of domains, including various computer vision tasks \cite{ gustafsson2020energy,gustafsson2020train}, as well as generative modeling tasks such as image and text generation \cite{du2019implicit,du2020improved, deng2020residual}.
Many other works have investigated using the notion of implicit functions in learning, including works that investigate implicit layers \cite{amos2017optnet, niculae2018sparsemap, wang2019satnet, bai2019deep}. There is also a surge of interest in geometry representation learning in implicit representations \cite{park2019deepsdf,mescheder2019occupancy,chen2019learning,saito2019pifu}. In robotics, implicit models have been developed for modeling discontinuous contact dynamics \cite{pfrommer2020contactnets}.

\igor{drop in a textbook (Russ'?) ref about switching dynamical systems?} 

\textbf{Energy-Based Models in Policy Learning}. In reinforcement learning, \cite{haarnoja2017reinforcement} uses an EBM formulation as the policy representation.  Other recent work \cite{du2020planning} uses EBMs in a model-based planning framework, or uses EBMs in imitation learning \cite{liu2020energy} but with an on-policy algorithm.  A trend as well in recent RL works has been to utilize an EBM as part of an overall algorithm, i.e. \cite{kostrikov2021offline,nachum2021provable}.)



\textbf{Policy Learning via Imitation Learning.} In addition to behavioral cloning (BC) \cite{pomerleau1989alvinn}, the machine learning and robotics communities have explored many additional approaches in imitation learning \cite{osa2018algorithmic,peng2018deepmimic,peng2021amp}, often in ways that need additional information. One route is by collecting on-policy data of the learned policy, and potentially either labeling with rewards to perform on-policy reinforcement learning (RL) \cite{atkeson1997robot, ng2000algorithms, rajeswaran2017learning} or labeling actions by an expert \cite{ross2011reduction}.  Distribution-matching algorithms like GAIL \cite{ho2016generative} require no labeling, but may require millions of on-policy environment interactions. While algorithms like ValueDice~\cite{kostrikov2019imitation} implement distribution matching in a sample-efficient off-policy setting, they have not been proven on image-observations or high degree-of-freedom action spaces.  Another route to using more information beyond BC is for the off-policy data to be labeled with rewards, which is the focus of the offline RL community \cite{fu2020d4rl}.  All of these directions are good ideas. A perhaps not fully appreciated finding, however, is that in some cases even the simplest forms of BC can yield surprisingly good results.  On offline RL benchmarks, prior works' implementations of BC already show reasonably competitive results with offline RL algorithms \cite{fu2020d4rl, gulcehre2020rl}. 
In real-world robotics research, BC has been widely used in policy learning \cite{zhang2018deep,rahmatizadeh2018vision,florence2019self,zeng2020transporter}.  Perhaps the success of BC comes from its \emph{simplicity}: it has the lowest data collection requirements (no reward labels or on-policy data required), can be data-efficient \cite{florence2019self, zeng2020transporter}, and it is arguably the simplest to implement and easiest to tune (with fewer hyperparameters than RL-based methods).  

\textbf{Approximation of Discontinuous Functions.} The foundational results of Cybenko \cite{cybenko1989approximation} and others in Universal Approximation of neural networks have had foundational impact in guiding machine learning research and applications. Various approaches have been developed in the function approximation literature and elsewhere to approximate discontinuous functions \cite{butzer1987approximation,tampos2012accurate,kvernadze2010approximation,stella2016very}, which typically do not use neural networks. Also motivated by applications to modeling phenomena for robots, \cite{selmic2002neural} develops theory of approximating discontinuous functions with neural networks, but the method requires a-priori knowledge of the discontinuity's location.  Our work builds on the well-known and well-applied results in continuous neural networks, but through composition with $\arg\min$ provides a notion of universal approximation even for discontinuous, set-valued functions.

\section{Conclusion}\label{sec:conclusion}


In this paper we showed that reformulating supervised imitation learning as a conditional energy-based modeling problem, with inference-time implicit regression, often greatly outperforms traditional explicit policy baselines. This includes on tasks with {\em{high-dimensional action spaces}} (up to 30-dimensional in the D4RL human-expert tasks), {\em{visual observations}}, and {\em{in the real world}}. In terms of limitations, a primary comparison with explicit models is that they typically require more compute, both in training and inference (see Appendix for comparisons).  However, we have both shown that we can run implicit policies for real-time vision-based control in the real world, and training time is modest compared to offline RL algorithms.  To further motivate the use of implicit models, we presented an intuitive analysis of energy-based model characteristics, highlighting a number of potential benefits that, to the best of our knowledge, are not discussed in the literature, including their ability to accurately model discontinuities. Lastly, to ground our results theoretically we developed a notion of universal approximation for implicit models which is distinct from that of explicit models.

\clearpage
\acknowledgments{
The authors would like to thank Vikas Sindwhani for project direction advice; Steve Xu, Robert Baruch, Arnab Bose for robot software infrastructure; Jake Varley, Alexa Greenberg for ML infrastructure; and Kamyar Ghasemipour, Jon Barron, Eric Jang, Stephen Tu, Sumeet Singh, Jean-Jacques Slotine, Anirudha Majumdar, Vincent Vanhoucke for helpful feedback and discussions.
}



\small
\bibliography{main}  
\normalsize

\newpage

\appendix

\begin{center}
{\bf{\LARGE Appendix for {\em Implicit Behavioral Cloning}}}
\end{center}

{
  \hypersetup{linkcolor=black}
  \tableofcontents
}





\addtocontents{toc}{\protect\setcounter{tocdepth}{3}}
\section{Contributions Statement}\label{sec:appendix-contributions}

Due to space constraints we did not include a comprehensive contributions statement in the main manuscript, but include one here for clarity:

\begin{enumerate}
    \item We present Implicit Behavioral Cloning (Implicit BC), which is a novel, simple method for imitation learning in which behavioral cloning is cast as a conditional energy-based modeling (EBM) problem, and inference is performed via sampling-based or gradient-based optimization.
    \item We validate Implicit BC in real-world robot experiments, in which we demonstrate physical robots performing several end-to-end, contact-rich pushing tasks (including precision insertion, and multi-item sorting) driven with only images as input, and only human demonstrations provided as training data.  Implicit BC performs significantly better than our explicit BC baseline across all real-world tasks, including an {\em{order-of-magnitude}} increase in performance on the precision insertion task.  On the sorting task, the models are shown to be capable of solving an up-to-60-second horizon for a contact-rich, combinatorial task with complex multi-object collisions.
    \item We present extensive simulation experiments comparing Implicit BC to both comparable explicit models from the same codebase, and also author-reported quantitative results on the human-expert tasks from the standard D4RL benchmark.  We find both our explicit BC and implicit BC models provide competitive or state-of-the-art performance on D4RL tasks with human-provided demonstrations, despite using no reward information.  Averaged across all tasks, we find implicit BC outperforms our own best explicit BC models.
    \item We analyze the nature of implicit models in simple 1D-1D examples, and we highlight aspects of implicit models that we believe are not known to the generative modeling community, including their behavior (i) at discontinuities and (ii) in extrapolation. 
    \item We provide theoretical insight into implicit models, including proofs of their (i) representational abilities (Thm. \ref{thm:any-function-theorem}), and (ii) approximation abilities (Thm. \ref{universal-approximation-tmp}), which are shown to be distinct from continuous explicit models in their ability to handle discontinuities and set-valued functions. 
\end{enumerate}

\section{Energy-Based Model Training and Implicit Inference Details}\label{sec:ebm_variants_table}

Our results critically depend on energy-based model (EBM) training, but we do not consider the specific methods we use to be our main contributions (see Sec.~\ref{sec:appendix-contributions} for a list).  That said, after considerable experience training conditional EBMs on both simple function-fitting tasks, and on policy learning tasks, we believe it is useful to the research community to describe method specifics in detail.  Our goal is to emphasize simplicity when possible, in order to encourage more folks to use implicit energy-based regression rather than explicit regression.  We first review our approach using derivative-free optimization, then our  autoregressive version, and then our approach using Langevin gradient-based sampling.  For each, we discuss (i) how to {\em{train}} the models, and (ii) how to perform {\em{inference}} with the models.  For a more comprehensive overview of training EBMs, see \cite{song2021train}.  Note we will release code as well for training and inference.

For all methods, to compute $\bfy_{\text{min}}$ and $\bfy_{\text{max}}$ we (1) take the per-dimension min and max over the training data, (2) add a small buffer, typically 0.05($\bfy_{\text{max}}-\bfy_{\text{min}}$) on each side, and then (3) clip these min and max values to the environments' allowed min/max values.  For agents that do not use the full range of the environments' allowed values for a given dimension, this enables more precision on that action dimension.  Also all methods use Adam optimizer with default $\beta_1=0.9$, $\beta_2=0.999$ values.

\subsection{Method with Derivative-Free Optimization.}\label{subsec:dfo}

For training, this method is very simple.  For counter-examples we draw from the uniform random distribution:
${\color{red} \tilde{\mathbf{y}}} \sim \mathcal{U}(\mathbf{y}_{\text{min}}, \mathbf{y}_{\text{max}})$,
where $\mathbf{y}_{\text{min}}, \mathbf{y}_{\text{max}}\in \mathbb{R}^m$.  Training consists of drawing batches of data, sampling counterexamples for each sample in each batch, and applying $\mathcal{L}_{\text{InfoNCE}}$ (Sec.~2).  We typically use a batch size of 512, with 256 counter-examples per sample in the batch.  All $\{\bfx\}$ and $\{\bfy\}$ (i.e. $\bfo$ and $\bfa$ for observations and actions), in the training dataset are normalized to per-dimension zero-mean, unit variance. We use typically a $1e-3$ initial learning rate and an exponential decay, 0.99 decay each 100 steps.  We find that regularizing the models with Dropout does not help performance, perhaps because the stochastic training process (counter-example sampling in each training step) self-regularizes the models.

Given a trained energy model $E_{\theta}(\bfx, \bfy)$, we use the following derivative-free optimization algorithm to perform inference:

\begin{algorithm}[H]
\SetAlgoLined
\KwResult{$\hat{\bfy}$}
 Initialize: $\{\tilde{\bfy_i}\}_{i=1}^{N_{\text{samples}}} \sim \mathcal{U}(\mathbf{y}_{\text{min}}, \mathbf{y}_{\text{max}})$, $\sigma = \sigma_{\text{init}}$  \;
  \For{iter in 1, 2, ..., $N_{\text{iters}}$}{
    $\{E_i\}_{i=1}^{N_{\text{samples}}} = \{E_{\theta}(\bfx, \tilde{\bfy}_i)\}_{i}^{N_{\text{samples}}}$ \ {\scriptsize (compute energies)}\;
    $\{\tilde{p}_i\}_{i=1}^{N_{\text{samples}}} = \{\frac{e^{-E_i}}{\sum_{j=1}^{N_{\text{samples}}} e^{-E_j}} \} \ \ $ {\scriptsize (softmax)}\;
    
    \If{iter < $N_{\text{iters}}$}{
        $\{\tilde{\bfy_i}\}_{i=1}^{N_{\text{samples}}} \gets  \ \sim \text{Multinomial}(N_{\text{samples}},\{\tilde{p}_i\}_{i=1}^{N_{\text{samples}}},\{\tilde{\bfy_i}\}_{i=1}^{N_{\text{samples}}})$  {\scriptsize (resample with replacement)}\;
        $\{\tilde{\bfy_i}\}_{i=1}^{N_{\text{samples}}} \gets \{\tilde{\bfy_i}\}_{i=1}^{N_{\text{samples}}} + \ \sim \mathcal{N}(0,\sigma)$ {\scriptsize(add noise)}\;
        $\{\tilde{\bfy_i}\}_{i=1}^{N_{\text{samples}}} = \text{clip}(\{\tilde{\bfy_i}\}_{i=1}^{N_{\text{samples}}}, \bfy_{\text{min}}, \bfy_{\text{max}})$ {\scriptsize(clip to $\bfy$ bounds)} \;
        $\sigma \gets K\sigma $ {\scriptsize (shrink sampling scale)} \;
    }
  }
 $\hat{\bfy} = \arg\max(\{\tilde{p}_i\}, \{\tilde{\bfy}_i\}$)
 \caption{Derivative-Free Optimizer}
\end{algorithm}
Where $\text{Multinomial}(N_{\text{samples}},\{\tilde{p}_i\}_{i=1}^{N_{\text{samples}}},\{\tilde{\bfy_i}\}_{i=1}^{N_{\text{samples}}})$ refers to sampling $N_{\text{samples}}$ times from the multinomial distribution with probabilities $\{\tilde{p}_i\}_{i=1}^{N_{\text{samples}}}$ returning associated elements $\{\tilde{\bfy_i}\}_{i=1}^{N_{\text{samples}}}$.  For simplicity the noise is written as being drawn from $\sim \mathcal{N}(0,\sigma)$, but this should be an $N_{\text{samples}}$-dimensional vector with an independent Gaussian noise sample for each element. This algorithm is very similar to the Cross Entropy Method \cite{de2005tutorial}, but has a few differences: (i) our algorithm does not use a fixed number of elites, (ii) re-sampling with replacement, and (iii) we shrink the sampling variance via a prescribed schedule rather than computing empirical variances.  We typically use $\sigma_{\text{init}}=0.33,  \ K=0.5, \ N_{\text{iters}}=3, \ N_{\text{samples}}=16,384$, unless otherwise noted.

While the above method works great for up to $\bfy$ of 5 dimensions or less (Sec.~\ref{subsec:comparison-of-variants}), we look at both autoregressive and gradient-based methods for scaling to higher dimensions.

\subsection{Method with Autoregressive Derivative-Free Optimization.}\label{subsec:dfo-autoregressive}

In the autoregressive version we interleave training and inference with $m$ models, for $\bfy \in \mathbb{R}^m$, i.e. one model $E_{\theta}^j(\bfx, \bfy^{:j})$ for each dimension $j = 1, 2, ..., m$.  Model $E_{\theta}^j(\bfx, \bfy^{:j})$ takes in all $\bfy$ dimensions up to $j$.  This isolates sampling to one degree of freedom at a time, and enables scaling to higher dimensional action spaces.  For more on autoregressive energy models, see \cite{nash2019autoregressive}.

\begin{algorithm}[H]
\SetAlgoLined
\KwResult{$\hat{\bfy}$}
 Initialize: $\{\tilde{\bfy_i}\}_{i=1}^{N_{\text{samples}}} \sim \mathcal{U}(\mathbf{y}_{\text{min}}, \mathbf{y}_{\text{max}})$, $\sigma = \sigma_{\text{init}}$  \;
  \For{iter in 1, 2, ..., $N_{\text{iters}}$}{
    \For{$j$ in 0, 1, ..., $m$}{
        $\{E_i\}_{i=1}^{N_{\text{samples}}} = \{E^j_{\theta}(\bfx, \tilde{\bfy}^{:j}_i)\}_{i}^{N_{\text{samples}}}$ \ {\scriptsize (compute energies)}\;
        $\{\tilde{p}_i\}_{i=1}^{N_{\text{samples}}} = \{\frac{e^{-E_i}}{\sum_{j=1}^{N_{\text{samples}}} e^{-E_j}} \} \ \ $ {\scriptsize (softmax)}\;
        $\rightarrow$ {\em{if training, apply $\mathcal{L}_{\text{InfoNCE}}$ and update parameters of $E_{\theta}^j$}}
        }
        \If{iter < $N_{\text{iters}}$}{
            $\{\tilde{\bfy}_i^{:j}\}_{i=1}^{N_{\text{samples}}} \gets  \ \sim \text{Multinomial}(N_{\text{samples}},\{\tilde{p}_i\}_{i=1}^{N_{\text{samples}}},\{\tilde{\bfy}^{:j}_i\}_{i=1}^{N_{\text{samples}}})$ {\scriptsize (resample with replacement)}\;
            $\{\tilde{\bfy}^{j}_i\}_{i=1}^{N_{\text{samples}}} \gets \{\tilde{\bfy}^{j}_i\}_{i=1}^{N_{\text{samples}}} + \ \sim \mathcal{N}(0,\sigma)$ {\scriptsize(add noise)}\;
            $\{\tilde{\bfy_i}\}_{i=1}^{N_{\text{samples}}} = \text{clip}(\{\tilde{\bfy_i}\}_{i=1}^{N_{\text{samples}}}, \bfy_{\text{min}}, \bfy_{\text{max}})$ {\scriptsize(clip to $\bfy$ bounds)} \;
            $\sigma \gets K\sigma $ {\scriptsize (shrink sampling scale)} \;
        }
  }
 $\hat{\bfy} = \arg\max(\{\tilde{p}_i\}, \{\tilde{\bfy}_i\}$)
 \caption{Autoregressive Derivative-Free Optimizer}
\end{algorithm}

\subsection{Method with Gradient-based, Langevin MCMC} \label{subsec:langevin}

For gradient-based MCMC (Markov Chain Monte Carlo) training we use the approach described in \cite{du2019implicit, mordatch2018concept} which uses stochastic gradient Langevin dynamics (SGLD) \cite{welling2011bayesian}:  
%
%
$${\color{red} ^k\tilde{\mathbf{y}}_i^j} =  {\color{red} ^{k-1}\tilde{\mathbf{y}}_i^j} - \lambda \big(\frac{1}{2}\nabla_{\mathbf{y}} E_{\theta}(\mathbf{x}_i, \ {\color{red} ^{k-1}\tilde{\mathbf{y}}^j_i}) + \omega^k \big) , \ \ \ \omega^k \sim \mathcal{N}(0, \sigma)$$
Note that in the conditional case, 
$\nabla$ is
respect to only $\mathbf{y}$, and not $\mathbf{x}$. 
As in \cite{du2019implicit, mordatch2018concept} we initialize ${\color{red}\{ ^0\tilde{\mathbf{y}} \}}$ from the uniform distribution, similar to Sec.~\ref{subsec:dfo}, but then optimize these contrastive samples with MCMC.  For each $N_{\text{neg}}$, we run $N_{\text{MCMC}}$ steps of the MCMC chain.  As recommended in \cite{grathwohl2019your} we use a polynomially-decaying schedule for the step-size $\lambda$.  
Note backpropagation is not performed backwards through the chain, but rather a \texttt{stop\_gradient()} is used after implicitly generating the samples \cite{du2019implicit}.  Also as in \cite{du2019implicit} we clip gradient steps, choosing to clip the full $\Delta \bfy$ value, i.e. after the gradient and noise have been combined.  Additionally for inference we run the Langevin MCMC chain a second time, giving twice as many inference Langevin steps as were used during training.  Also, for Langevin, all $\{\bfy\}$ (i.e. $\bfa$ for actions), in the training dataset are normalized per-dimension to span the range $[\bfy_{\text{min}}=-1, \  \bfy_{\text{max}}=1]$.

\subsubsection{Gradient Penalty}

For additional stability of training, we use both spectral normalization \cite{miyato2018spectral} as in \cite{du2019implicit}, and also add gradient penalties.  Gradient penalties are well known in the GAN community, and the form of our gradient penalty is inspired by \cite{gradientpenalty2021}:
$$\mathcal{L}_{grad} = \sum_{i=1}^N \sum_{j=1}^{N_{\text{neg}}} \sum_{k=\{\cdot\}} \max \bigg(0 \ , \ (||\nabla_{\mathbf{y}} E_{\theta}(\mathbf{x}_i, {\color{red} ^k\tilde{\mathbf{y}}^j_i})||_\infty - M) \bigg)^2 $$
Where the sums over $i$, $j$, $k$, represent respectively the sum over training samples, counter-examples per each data sample, and some subset of iterative chain samples for which we find it is sufficient to use only the final step, $k=\{N_{\text{MCMC}}\}$. 
%
$M$ controls the scale of the gradient relative to the noise $\omega$ in SGLD.  If $M$ is too large, then the noise in SGLD has little effect; if $M$ is too small, then the noise overpowers the gradient.  Empirically we find $M=1$ is a good setting. 
On each step of training, the gradient penalty loss is simply added to the InfoNCE loss, i.e. $\mathcal{L} = \mathcal{L}_{\text{grad}} + \mathcal{L}_{\text{InfoNCE}}$. Lastly, we note there are other approaches for improving stability of Langevin-based training, such as loss functions with entropy regularization \cite{du2020improved}.


To aid intuition on why constraints on the gradients $\nabla E(\cdot)$ are allowable restrictions for the model, Corollary 1.1 shows that the energy model is capable of having an arbitrary Lipschitz constant.

\subsection{Comparison of EBM Variants}\label{subsec:comparison-of-variants}

\begin{wrapfigure}{r}{0.43\textwidth}
  \vspace{-1.8em}
  \begin{center}
    \includegraphics[width=0.4\textwidth]{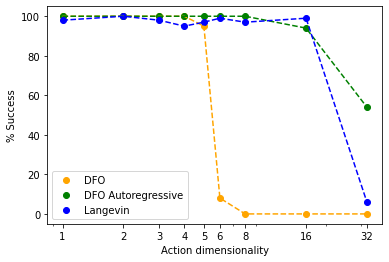}
  \end{center}
  \vspace{-1em}
  \caption{Comparison of used EBM methods on the $N$-D particle environment, showing methods using DFO (derivative-free optimization, Sec.~\ref{subsec:dfo}), autoregressive DFO (Sec.~\ref{subsec:dfo-autoregressive}), or Langevin dynamics (Sec.~\ref{subsec:langevin}).}
  \label{fig:ebm-variants}
  \vspace{-1.0em}
\end{wrapfigure}

A key comparison between these methods is the tradeoff of simplicity for higher-dimensional action spaces. As shown in Fig.~\ref{fig:ebm-variants}, with only 2,000 demonstrations in the $N$-D particle environment, the joint-dimensions-optimized derivative-free version (Sec.~\ref{subsec:dfo}) fails to solve the environment past $N=$5 dimensions, due to the curse of dimensionality and its naive sampling.  Both the autoregressive (Sec.~\ref{subsec:dfo-autoregressive}) and Langevin (Sec.~\ref{subsec:langevin}) versions are able to solve the environment reliably up to 16 dimensions, and with nonzero success at 32 dimensions.  The autoregressive version requires no new gradient stabilization, and can use only the same loss function, $\mathcal{L}_{\text{InfoNCE}}$, but is memory-intensive, requiring $N$ separate models for $N$ dimensions.  The Langevin version scales to high dimensions with only one model, but requires gradient stabilization.  For more on autoregressive and Langevin generative EBMs, see \cite{nash2019autoregressive} and \cite{du2019implicit,du2020improved}. Which variant is used for each of our evaluation tasks is enumerated in Section \ref{policy_learning_results_overview_and_protocol}.

\section{Additional Experimental Details and Analysis}

\subsection{Per-Task Summary of \# Demonstrations and Environment Dimensionalities}

In this section, with the table below, we highlight key aspects of the different evaluated policy learning experimental tasks, specifically the \# of demonstrations for each task and the dimensionalities of the environments (comprised of the observation spaces, state spaces, and action spaces).  As is highlighted in the table, the various tasks cover a wide set of challenges, including: low-data-regime tasks, and tasks with high observation, state, and/or action dimensionalities.

\begin{table}[H]
  \centering
  \tiny
  \setlength\tabcolsep{5.3pt}
  \begin{tabular}{@{}llcccccc@{}}
  \toprule
    & & \multicolumn{1}{c}{Demonstrations} & \multicolumn{3}{c}{Dimensionalities} \\
    \cmidrule(lr){3-3} \cmidrule(lr){4-6}
   \emph{Domain} & \emph{Task Name}  & \#  & {\textit{Observations}} & {\textit{States}} & {\textit{Actions}}& Results Shown In & Comment  \\
  \hline
  \hline
  \multirow{7}{*}{D4RL Human-Experts} & kitchen-complete & {\textbf{\textcolor{red}{19}}} & {\textbf{\textcolor{OliveGreen}{60}}} & {\textbf{\textcolor{blue}{60}}} & {\textbf{\textcolor{cyan}{9}}} & \multirow{7}{*}{Table 2} \\
                          & kitchen-partial & 601 & {\textbf{\textcolor{OliveGreen}{60}}} & {\textbf{\textcolor{blue}{60}}} & {\textbf{\textcolor{cyan}{9}}} \\
                          & kitchen-mixed & 601 & {\textbf{\textcolor{OliveGreen}{60}}} & {\textbf{\textcolor{blue}{60}}} & {\textbf{\textcolor{cyan}{9}}} \\
                          & pen-human & {\textbf{\textcolor{red}{50}}} & {\textbf{\textcolor{OliveGreen}{45}}} & {\textbf{\textcolor{blue}{45}}} & {\textbf{\textcolor{cyan}{24}}} \\
                          & hammer-human & {\textbf{\textcolor{red}{25}}} & {\textbf{\textcolor{OliveGreen}{46}}} & {\textbf{\textcolor{blue}{46}}} & {\textbf{\textcolor{cyan}{26}}} \\
                          & door-human & {\textbf{\textcolor{red}{25}}} & {\textbf{\textcolor{OliveGreen}{39}}} & {\textbf{\textcolor{blue}{39}}} & {\textbf{\textcolor{cyan}{28}}} \\
                          & relocate-human & {\textbf{\textcolor{red}{25}}} & {\textbf{\textcolor{OliveGreen}{39}}} & {\textbf{\textcolor{blue}{39}}} & {\textbf{\textcolor{cyan}{30}}} \\
  \hline
  \hline
  \multirow{9}{*}{Particle Integrator} & "1D"-Particle & 2,000 & 4 & 4 & 1 & \multirow{9}{*}{Figure 6} \\
                                       & "2D"-Particle & 2,000 & 8 & 8 & 2  \\
                                       & "3D"-Particle & 2,000 & 12 & 12 & 3  \\
                                       & "4D"-Particle & 2,000  & 16 & 16 & 4  \\
                                       & "5D"-Particle & 2,000  & 20 & 20 & 5  \\
                                       & "6D"-Particle & 2,000  & 24 & 24 & 6  \\
                                       & "8D"-Particle & 2,000  & {\textbf{\textcolor{OliveGreen}{32}}} & {\textbf{\textcolor{blue}{32}}} & 8  \\
                                       & "16D"-Particle & 2,000 & {\textbf{\textcolor{OliveGreen}{64}}} & {\textbf{\textcolor{blue}{64}}} & {\textbf{\textcolor{cyan}{16}}}  \\
                                       & "32D"-Particle & 2,000 & {\textbf{\textcolor{OliveGreen}{128}}} & {\textbf{\textcolor{blue}{128}}} & {\textbf{\textcolor{cyan}{32}}}  \\
  \hline
  \hline
  \multirow{3}{*}{Simulated Pushing} & Single Target, States & 2,000 & 10 & 10 & 2 & \multirow{3}{*}{Table 3}  \\
                                     & Multi Target, States & 2,000 & 13 & 13 & 2 \\
                                     & Single Target, Pixels & 2,000 & {\textbf{\textcolor{OliveGreen}{129,600}}} & 10 & 2 & & 180x240x3 image\\
  \hline
  \hline
  \multirow{2}{*}{Planar Sweeping} & Image input  & {\textbf{\textcolor{red}{50}}} & {\textbf{\textcolor{OliveGreen}{27,648}}} & {\textbf{\textcolor{blue}{203}}} & 3 & \multirow{2}{*}{Table 4} & 96x96x3 image\\
                                   & State input & {\textbf{\textcolor{red}{50}}} & {\textbf{\textcolor{OliveGreen}{203}}} & {\textbf{\textcolor{blue}{203}}} & 3  & & \\
  \hline
  \hline
  \multirow{1}{*}{Bi-Manual Sweeping} & Image-and-state input & 1,000 & {\textbf{\textcolor{OliveGreen}{27,660}}} & {\textbf{\textcolor{blue}{372}}} & {\textbf{\textcolor{cyan}{12}}} & \multirow{1}{*}{Table 5} & 96x96x3 image  \\
  \hline
  \hline
  \multirow{4}{*}{Real Robot} & Push-Red-Then-Green & {\textbf{\textcolor{red}{95}}} & {\textbf{\textcolor{OliveGreen}{32,400}}} & 8 & 2 & \multirow{4}{*}{Table 6} & \multirow{4}{*}{90x120x3 image.} \\
                              & Push-Red/Green-Multimodal & 410 & {\textbf{\textcolor{OliveGreen}{32,400}}} & 8 & 2 \\
                              & Insert-Blue & 223 & {\textbf{\textcolor{OliveGreen}{32,400}}} & 8 & 2 \\
                              & Sort-Blue-From-Yellow & 502 & {\textbf{\textcolor{OliveGreen}{32,400}}} & {\textbf{\textcolor{blue}{26}}} & 2 \\
  
  \bottomrule
  \end{tabular}
  \caption{\scriptsize Summary of the \# demonstrations and {\textit{observation/state/action-}}dimensionalities for each of the environments used in policy learning experiments.  Highlighted in color are {\textbf{\textcolor{red}{(red), low-data-regime tasks}}} with \# demos under 100, {\textbf{\textcolor{OliveGreen}{(green), high observation dimensionality}}} above 25, {\textbf{\textcolor{blue}{(blue), high state dimensionality}}} above 25, and  {\textbf{\textcolor{cyan}{(cyan), high action dimensionality}}} at or above 9.}
  \label{table:summary-of-demo-dims-table}
  \vspace{-2.5em}
\end{table}

\subsection{Training and Inference Times, Implicit vs. Explicit Comparison}\label{sec:timing}

\textbf{D4RL Train+Eval Times.} Table \ref{table:d4rl-train-eval-times} compares example training + evaluation times for the chosen best-performing models on the D4RL tasks.  We report both the training steps/second, and then also the full time for running an experiment, which comprises training to 100k steps with intermittently evaluating 100 episodes every 10k steps.
\begin{table}[H]
\centering
\scriptsize
\begin{tabular}{|l|l|l|l|}
\hline
                                & \textbf{Implicit BC} & \textbf{Explicit BC} & Comment \\
\hline
Configuration                   & As in Section \ref{subsec:d4rl-configs} & As in Section \ref{subsec:d4rl-configs} &\\
{\em{Summary:}}                          & 512 batch size                 & 512 batch size &\\
                                         & 512x8 MLP                      & 2048x8 MLP &\\
                                         & 100 Langevin iterations        & &\\
                                         & 8 counter examples        & &\\
\hline
Device                                   & TPUv3                 & TPUv3 &\\
\hline
Task                                     & door-human-v0         & door-human-v0 &\\
\hline
Training rate (steps/sec)                & 17.9             & 101.3 &  \\
Total train + eval time (hrs)              & 3.4              & 0.66  &  100k train steps, 100 evals every 10k steps\\
\hline
\end{tabular}
\caption{Comparison of training+evaluation times for implicit vs. explicit models on an example D4RL task.}
\label{table:d4rl-train-eval-times}
\end{table}
As is shown in Table \ref{table:d4rl-train-eval-times}, the best-performing implicit models, which are 100-iteration Langevin models, take 5.6x the train+eval time compared to the best-performing explicit models.  Note that even the 3.4-hour full train+eval time for the implicit model is considerably faster than what has been reported \cite{kostrikov2021offline} for completing a train+eval on a comparable D4RL task for CQL: 16.3 hours.  

\textbf{Real-World Image-based Train and Inference Times.} The following compares relevant training and inference times for our real-world tasks.  In contrast to the D4RL scenario discussed above, in this scenario (a) there are large image observations to process, and (b) there are no simulated evaluations run during training.  We report the training steps/sec  rate, as well as the total train time, which is performed on a server of 8 GPUs.  Once trained, the model is then deployed on a single-GPU machine, for which we report the inference times.
\begin{table}[H]
\centering
\scriptsize
\begin{tabular}{|l|l|l|l|}
\hline
                                & \textbf{Implicit BC} & \textbf{Explicit BC} & Comment \\
\hline
Configuration                   & As in Section \ref{subsec:real-robot-configs}              & As in Section \ref{subsec:real-robot-configs} &\\
{\em{Summary:}}                          & 128 batch size                 & 128 batch size &\\
                                         & 90x120 images                  & 90x120 images  &\\
                                         & 4-layer ConvMaxPool            & 4-layer ConvMaxPool &\\
                                         & 1024x4 MLP                     & 1024x4 MLP &\\
                                         & 256 counter examples        & &\\
\hline
Training Device                                   & 8x V100 GPU                 & 8x V100 GPU &\\
\hline
Task                                     & Push-Red-Then-Green         & Push-Red-Then-Green &\\
\hline
Training rate (steps/sec)                & 4.7             & 5.5 &  \\
Total train time (hrs)                   & 5.0             & 5.8 &  100k train steps \\
\hline
Inference Device                         & 1x RTX 2080 Ti GPU & 1x RTX 2080 Ti GPU & \\
\hline
Inference parameters           & 1024 samples & & \\
                               & 3 dfo iterations & & \\
\hline
Inference time (ms)                      & 7.22 & 3.49  &  \\
\hline
\end{tabular}
\caption{Example comparison of training  and inference times for implicit vs. explicit models used for a Real Robot task.}
\label{table:real-train-inference-times}
\end{table}
Table~\ref{table:real-train-inference-times} shows that for these visual models, the training times are reasonably comparable for the implicit and explicit models -- 5.0 and 5.8 hours respectively.  Compared to the previous D4RL scenario, this can be explained because the training time is mostly dominated by visual processing.  As the implicit models use late fusion (Sec.~\ref{subsec:architectures}), the visual processing time is identical to the explicit models.  For inference, the chosen implicit models show a modest increase in inference time, up to 7.22 milliseconds (ms) from 3.49 ms for the explicit model.  This can be attributed to time spent on the iterative derivative-free optimization. Note that the inference time of the implicit model can be adjusted by adding/decreasing the number of samples and iterations.  For example, using the same trained model but increasing the samples from 1024 to 2048 causes the inference time to increase to 9.25 ms.

\subsection{Additional Real-World Experimental Details}\label{sec:appendix-real}

\subsubsection{Robot Hardware Configuration, Workspace, and Objects}

Our real-world experiments make use of a UFACTORY xArm6 robot arm with all state logged at 100 Hz. Observations are recorded from an Intel RealSense D415 camera, using RGB-only images at 640x360 resolution, logged at 30 Hz. The cylindrical end-effector is made from a 6 inch long plastic PVC pipe sourced from McMaster-Carr (\href{https://www.mcmaster.com/9173K515/}{9173K515}). The work surface is 24 x 18 inch smooth wood cutting board. The manipulated objects are from the Play22 Baby Blocks Shape Sorter toy kit (\href{https://play22usa.com/shop/ols/products/16olfxvr5t}{Play22}). The targets for the tasks were constructed by hand out of wood and spraypainted black.  All demonstrations were provided by a mouse-based interface for providing real-time demonstrations.

The 6DOF robot is constrained to move in a 2D plane above the table.  This aids in safety of the robot during operation, since it is constrained to not collide with the table and cannot provide normal forces against objects down into the table either.

\subsubsection{Robot Policy and Controller}

The learned visual-feedback policy operates at 5 Hz.  On a GTX 2080 Ti GPU, the implicit models (configuration in Sec.~\ref{policy_learning_results_overview_and_protocol}.5) complete inference in under 10 ms (see Sec.~\ref{sec:timing}), and so could be run faster than 5 Hz, but we find 5 Hz to be sufficient.  The learned action space is a delta Cartesian setpoint, from the previous setpoint to the new one.   The setpoints are linearly interpolated from their 5 Hz rate to be 100 Hz setpoints to our joint level controller.  The joint level controller uses PyBullet \cite{coumans2016pybullet} for inverse kinematics, and sends joint positions to the xArm6 robot at 100 Hz.














\subsection{Nearest-Neighbor Baseline}

This baseline memorizes all training data, and performs inference by looking up the closest observation in the training set and returning the corresponding action.  Specifically, given a finite training dataset of pairs $\{(\bfx, \bfy)\}_i$, denote the inputs as $X = \{\bfx\}_i$ and outputs $Y = \{\bfy\}_i$, preserving the ordering in both $X$ and $Y$.  Given some new observation $\bfx'$, the Nearest-Neighbor model, $N(\cdot)$, computes:
$$N(\bfx')=Y[\underset{i}{\arg\min} \ |\bfx' - X[i]| \ ]$$
for some norm $|\cdot|$.  Specifically we used L2 norm.  We experimented with normalizing all observations per-dimension to be unit-variance, but did not find this to improve results.  For environments with state-only observations (no images), we can compute this exactly and quickly all in processor memory, but for the image-observation Simulated Pushing task we tested, the dataset did not fit in memory.  Accordingly, we used a random linear projection, which is known to be a viable method for nearest-neighbor lookup of image data \cite{bingham2001random}, from the observation space to a 128-dimensional vector.  We then stored all these 128-dimensional vectors in memory, and used these for Nearest-Neighbor lookups.




\subsection{$N$-D Particle Environment Description}\label{nd_particle_environmnet_description}

In this environment, the agent (\ie particle) moves from its current configuration $q\in\mathbb{R}^N$ to a goal configuration $g_{0}\in\mathbb{R}^N$, followed by a second goal configuration $g_{1}\in\mathbb{R}^N$. Given its position $q$ and velocity $\dot{q}$, its action is the target position $\hat{q}\in\mathbb{R}^N$ applied to a PD controller which computes acceleration $\ddot{q}$ according to: $\ddot{q}=k_p(\hat{q} - q)+k_d(\hat{\dot{q}}-\dot{q})$ where target velocity $\hat{\dot{q}} = 0$, and $k_p$ and $k_d$ are environment-fixed constant gains. Initial and goal particle configurations are randomized, for each dimension, in the range $[0,1]$ for each episode, and differ between training and testing. To generate demonstrations, a scripted policy returns actions $q=g_{0}$ until the agent falls within a radius $r$ of $g_{0}$, then returns actions $q=g_{1}$ until the agent falls within a radius $r$ of $g_{1}$. Agent state and goal positions are used as input to the policy, which is trained to imitate the behavior of the scripted policy and tested on its capacity to generalize to new goal configurations. This task can be thought of as modeling an $N$-dimensional step function while dealing with compounding errors. The mode switch between goals presents a discontinuity that needs to be learned.

\subsection{Analysis: Training Data Sparsity in the $N$-D Particle Tasks}

\begin{wrapfigure}{r}{0.43\textwidth}
  \vspace{-1.8em}
  \begin{center}
    \includegraphics[width=0.4\textwidth]{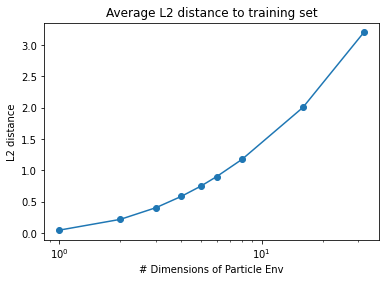}
  \end{center}
  \vspace{-1em}
  \caption{Depiction of {\em{training data sparsity}} on the $N$-D particle tasks, as $N$ is varied.  Shown, for each $N$-D variant of the task, is the average distance of an evaluation episode initialization to the training set of 2,000 demonstrations.}
  \label{fig:particle-sparsity}
  \vspace{-1.0em}
\end{wrapfigure}

To complement other analyses on generalization, sample complexity, and interpolation/extrapolation, we analyze in Fig.~\ref{fig:particle-sparsity} another notion of generalization: training data sparsity.  In the $N$-D particle experiments, as we increase N but hold the number of demonstrations constant, the training data effectively becomes much sparser over the observation space.  New test-time environments for evaluation are accordingly, as $N$ increases, on average farther and farther away from the training set.  This helps explain how the Nearest-Neighbor baseline cannot solve this task well past 1D, since memorizing the training data is insufficient, and to succeed in a higher-dimensional environment a model must generalize.  This analysis complements our simple 1D->1D figures on extrapolation/interpolation (Fig.~2 and Fig.~3 in the main paper) and our visual generalization and sample complexity analysis (Fig.~4 in the main paper).

\subsection{Additional D4RL tasks}

In the main paper we focused on the human-expert tasks from D4RL, but here provide results on additional D4RL tasks as well.  Note that the other tasks shown, except for `random', use a reinforcement-learning-trained agent for the task, and this reinforcement-learning agent itself has a policy that is a uni-modal continuous, explicit function approximator, and it was optimized as such. Additionally, as expected, supervised imitation learning methods, which do not make use of the additional reward information from the provided demonstrations, perform comparatively worse on tasks with sub-optimal demonstrations. This is true of all tasks with ``*medium*'' and ``*random'' in their task name. Additionally, as stated in Section \ref{policy_learning_results_overview_and_protocol}, we choose the EBM hyperparameters to maximize performance on the human-expert based environments (``Franka'' and ``Adroit'' tasks) at the expense of lower performance on the ``Gym''-mujoco tasks. However, for fair comparison with other methods, and according to the standard D4RL evaluation protocol, a single set of hyperparameters was used for all tasks rather than presenting results that maximize each environment.


\begin{table}[H]
  \centering
  \tiny
  \begin{tabular}{@{}llccccccccc@{}}
  \toprule
    & & \multicolumn{3}{c}{Baselines} & \multicolumn{4}{c}{Ours} \\
    \cmidrule(lr){3-5} \cmidrule(lr){6-9}
         &  &              &     &      & {\em{Explicit}} & {\em{Implicit}} & {\em{Explicit}} & {\em{Implicit}}  \\
  Method &  & BC           & CQL \cite{kumar2020conservative} & S4RL \cite{sinha2021s4rl} & BC (MSE) & BC (EBM) & BC (MSE) & BC (EBM)  \\
         &  & (from CQL)   &     &      &        &        & w/ RWR \cite{peters2007reinforcement} & w/ RWR \cite{peters2007reinforcement} \\
  \midrule
  Uses data & & $(\mathbf{o}, \mathbf{a})$ & $(\mathbf{o}, \mathbf{a}, r)$ & $(\mathbf{o}, \mathbf{a}, r)$ & $(\mathbf{o}, \mathbf{a})$ & $(\mathbf{o}, \mathbf{a})$ & $(\mathbf{o}, \mathbf{a}, r)$ & $(\mathbf{o}, \mathbf{a}, r)$\\
  \midrule
  \emph{Domain} & \emph{Task Name} \\
  \hline
  \hline
  \multirow{3}{*}{Franka} & kitchen-complete & 1.4 & 1.8 & 3.08 & 1.76 $\pm$0.04 & \textbf{3.37} $\pm$0.19 & 1.22 $\pm$0.18 & \textbf{3.37} $\pm$0.01 \\  
                          & kitchen-partial  & 1.4 & 1.9 & \textbf{2.99} & 1.69 $\pm$0.02 &  1.45 $\pm$0.35 & 1.86 $\pm$0.26 & 2.18 $\pm$0.05 \\
                          & kitchen-mixed    & 1.9 & 2.0 &               & \textbf{2.15} $\pm$\textbf{0.06} &  1.51 $\pm$0.39 & 2.03 $\pm$0.06 & \textbf{2.25} $\pm$\textbf{0.14} \\
  \hline
  \hline
  \multirow{5}{*}{Adroit} & pen-human            & 1121.9 & 1214.0 & 1419.6 & 2141 $\pm$109 & \textbf{2586} $\pm$\textbf{65} & 2108 $\pm$58.8 & \textbf{2446} $\pm$\textbf{207}\\
                          & hammer-human         & -82.4  & 300.2  &  \textbf{496.2} & -38 $\pm$25 & -133 $\pm$26  & -35.1 $\pm$45.1 & -9.3  $\pm$45.5\\
                          & door-human           & -41.7  & 234.3  &  \textbf{736.5} & 79 $\pm$15 & 361 $\pm$67  & 17.9 $\pm$ 13.8 & 399 $\pm$34\\
                          & relocate-human       & -5.6   & 2.0    &    2.1 & -3.5 $\pm$1.1 & -0.1 $\pm$2.4 & -3.7 $\pm$0.3 & $\textbf{3.6} \pm\textbf{2.5}$\\
  \hline
  \hline
  \multirow{16}{*}{Gym}   & halfcheetah-medium          & 4202  & 5232  & 5778 & 4273  & 4086  & & \\
                          & walker2d-medium             & 304   & 3637  & 4298 & 822   & 676   & & \\
                          & hopper-medium               & 923   & 1867  & 2548 & 966   & 2430  & & \\
                          & halfcheetah-medium-replay   &       & 4934  & 6101 & 4029  & 2766  & & \\
                          & walker2d-medium-replay      &       & 970   & 1392 & 480   & 433   & & \\
                          & hopper-medium-replay        &       & 940   & 1132 & 543   & 382   & & \\
                          & halfcheetah-medium-expert   & 4164  & 7467  & 9528 & 11758 & 4040  & & \\
                          & walker2d-medium-expert      & 520   & 4533  & 5152 & 640   & 745   & & \\
                          & hopper-medium-expert        & 3621  & 3592  & 3674 & 909   & 876   & & \\
                          & halfcheetah-expert          & 13004 & 12731 &      & 12802 & 9436  & & \\
                          & walker2d-expert             & 5772  & 7067  &      & 2677  & 3746  & & \\
                          & hopper-expert               & 3527  & 3557  &      & 3619  & 3549  & & \\
                          & halfcheetah-random          & -118  & 4115  & 6213 & 0     & -392  & & \\
                          & walker2d-random             & 33    & 323   & 1145 & 145   & -1.63 & & \\
                          & hopper-random               & 308   & 331   & 331  & 284   & 308   & & \\
  \bottomrule
  \end{tabular}
  \caption{\scriptsize Baseline comparisons on D4RL \cite{fu2020d4rl} tasks, including Mujoco gym tasks.  Results shown are the average of 3 random training initialization seeds, 100 evaluations each.}
  \label{table:d4rl-table}
  \vspace{-1.5em}
\end{table}

\section{Policy Learning Results Overview and Protocol}\label{policy_learning_results_overview_and_protocol}

In each section below we describe the protocols for the individual simulation experiments.  Note that Figure 5 was produced by averaging the performance of the best policies, for each type, within each domain across the different tasks of that domain.

For EBM variants that were used for which task: Simulated Pushing and Real World, with action dimensionality of 2, used derivative-free optimization (Sec.~\ref{subsec:dfo}).  For Planar Sweeping, with action dimensionality 3, and Bi-Manual Sweeping, with action dimensionality 12, we used autoregressive derivative-free optimization (Sec.~\ref{subsec:dfo-autoregressive}).  D4RL, with action dimensionality between 3 and 30, used Langevin dynamics (Sec.~\ref{subsec:langevin}).  Particle, with action dimensionality between 1 and 32, used Langevin dynamics as well.  See Sec.~\ref{subsec:comparison-of-variants} for a comparison of variants.

\subsection{D4RL Experiments}\label{subsec:d4rl-configs}

For {\textbf{D4RL}} experiments, we run sweeps over several hyperparameters for both the Implicit BC (EBM) and Explicit MSE-BC models. We choose the final hyperparameters based on max average performance over 3 D4RL environments: hammer-human-v0, door-human-v0, and relocate-human-v0. We use the same final hyperparameters across all D4RL tasks for the final results. Note that we paid closest attention to the human-teleoperation task performance when selecting a single set of hyper parameters for D4RL, particularly at the expense of slightly lower task performance on the gym-mujoco D4RL tasks. For all evaluations, we report average results over 100 episodes for 3 seeds. To calculate the aggregate D4RL performance metric ``D4RL Human-Experts" in Figure 5 of the paper, we first calculated the normalized performance metric for the kitchen-complete, kitchen-partial, kitchen-mixed, pen-human, hammer-human, door-human and relocate-human environments, then calculated the average across all these tasks.

The following hyperparameters were used for D4RL evaluation:

\subsubsection*{D4RL Implicit BC (EBM)}

\begin{table}[H]
\scriptsize
\begin{tabular}{|l|l|l|}
\hline
\textbf{Hyperparameter}              & \textbf{Chosen Value} & \textbf{Swept Values}      \\
\hline
EBM variant                          & Langevin              & \\
train iterations                     & 100,000               &                            \\
batch size                           & 512                   &                            \\
learning rate                        & 0.0005                &                            \\
learning rate decay                  & 0.99                  &                            \\
learning rate decay steps            & 100                   &                            \\
network size (width x depth)         & 512x8                 & 128x32, 512x8              \\
activation                           & ReLU                  & swish, ReLU                \\
dense layer type                     & spectral norm         & regular, spectral norm     \\
train counter examples               & 8                     & 1, 8, 64                   \\
action boundary buffer               & 0.05                  & 0.001, 0.05                \\
gradient penalty                     & final step only       & all steps, final step only \\
gradient margin                      & 1                     & 0.6, 1.0, 1.3              \\
langevin iterations                  & 100                   & 100, 150                   \\
langevin learning rate init.         & 0.5                   & 2.0, 1.0, 0.5, 0.1         \\
langevin learning rate final         & 1.00E-05              & 1e-4, 1e-5, 1e-6           \\
langevin polynomial decay power      & 2                     & 2.0, 1.0                   \\
langevin delta action clip           & 0.5                  & 0.05, 0.1, 0.5             \\
langevin noise scale                 & 0.5                   & 0.5, 1.0                   \\
langevin 2nd iteration learning rate & 1.00E-05              & 1e-1, 1e-2, 1e-5          \\
\hline
\end{tabular}
\end{table}

Shown also is an indication of training stability, across 5 different seeds, shown for the pen task.

\begin{figure}[H]
\begin{subfigure}{.5\textwidth}
  \centering
  \includegraphics[width=.9\linewidth]{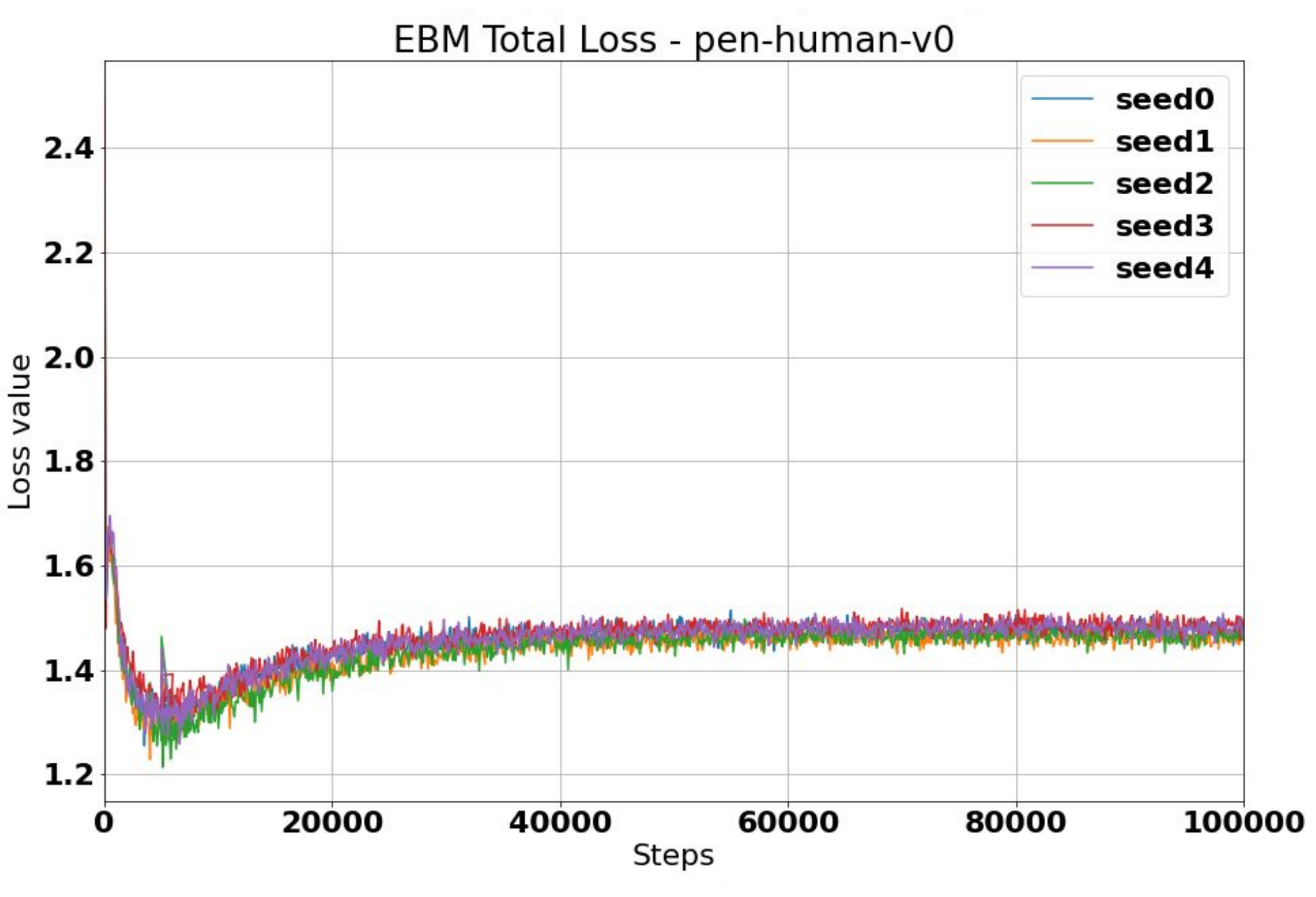}
  \caption{A plot of the total EBM loss on the pen-human-v0 D4RL environment for each of 5 seeds.  Note that with Langevin sampling, as the sample quality improves, the EBM loss can rise.}
  \label{fig:ebm-total-loss}
\end{subfigure}
\begin{subfigure}{.5\textwidth}
  \centering
  \includegraphics[width=.9\linewidth]{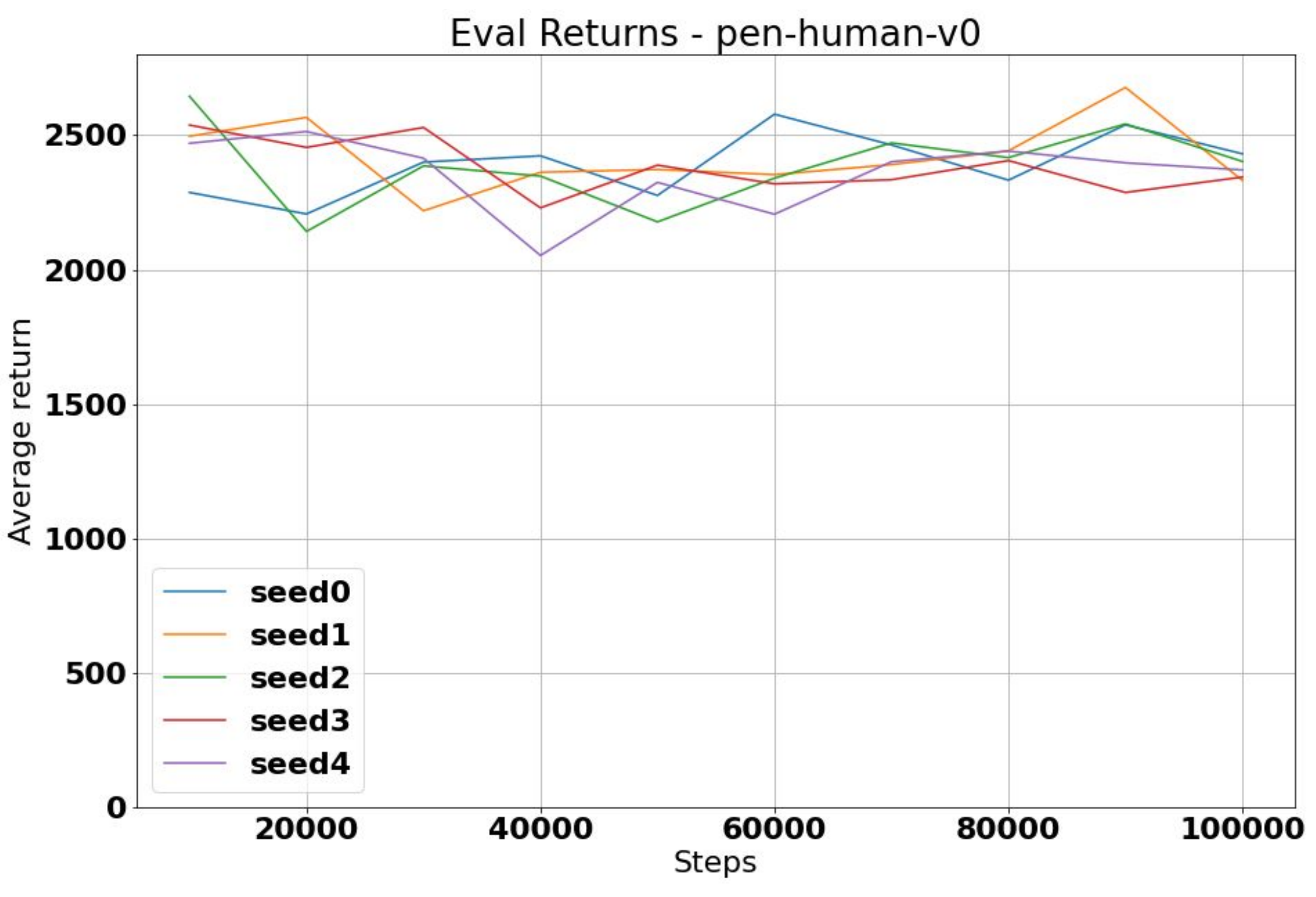}
  \caption{A plot of the eval returns from the same run on pen-human-v0 for 5 seeds, average of 100 evals.}
  \label{fig:eval-returns}
\end{subfigure}
\end{figure}

\subsubsection*{D4RL Explicit MSE-BC}
\begin{table}[H]
\scriptsize
\begin{tabular}{|l|l|l|}
\hline
\textbf{Hyperparameter}   & \textbf{Chosen Value} & \textbf{Swept Values}                                          \\
\hline
train iterations          & 100,000               &                                                                \\
batch size                & 512                   &                                                                \\
sequence length           & 2                     &                                                                \\
learning rate             & 0.001                 & 1e-3, 0.5e-3                                                   \\
learning rate decay       & 0.99                  &                                                                \\
learning rate decay steps & 200                   &                                                                \\
dropout rate              & 0.1                   &  0.0, 0.1                                                              \\
network size (width x depth)               & 2048x8                & 128x16, 128x32, 512x16, 512x32, 1024x4, 1024x8, 2048x4, 2048x8 
\\
activation                             & ReLU                  &                            \\
\hline
\end{tabular}
\end{table}

\subsection{Simulated Pushing Experiments}

For {\textbf{Simulated Pushing}} experiments, we run separate sweeps for each model for each of the States and Pixels versions of the task. All chosen hyperparameter sweeps and chosen values are given in tables below, and results are reported as the average of 100 episodes for 3 seeds.  

\subsubsection*{Simulated Pushing, States, Implicit BC (EBM)}

\begin{table}[H]
\scriptsize
\begin{tabular}{|l|l|l|}
\hline
\textbf{Hyperparameter}   & \textbf{Chosen Value} & \textbf{Swept Values}         \\
\hline
EBM variant               & DFO                   & \\
train iterations          & 100,000               &                               \\
batch size                & 512                   &                               \\
sequence length           & 2                     & 2, 4                          \\
learning rate             & 0.001                 &                               \\
learning rate decay       & 0.99                  &                               \\
learning rate decay steps & 100                   &                               \\
network size (width x depth)              & 128x8                 & 2048x4, 128x8, 128x16, 128x32 \\
activation                & ReLU                  &                               \\
dense layer type          & regular               &                               \\
train counter examples    & 256                   &                               \\
action boundary buffer    & 0.05                  &                               \\
gradient penalty          & none                  &                               \\
dfo samples               & 16384                 &                               \\
dfo iterations            & 3                     &  \\
\hline
\end{tabular}
\end{table}

\subsubsection*{Simulated Pushing, States, Explicit MSE-BC}
\begin{table}[H]
\scriptsize
\begin{tabular}{|l|l|l|}
\hline
\textbf{Hyperparameter}   & \textbf{Chosen Value} & \textbf{Swept Values}            \\
\hline
train iterations          & 100,000               &                                  \\
batch size                & 512                   &                                  \\
sequence length           & 2                     &                                  \\
learning rate             & 0.0005                & 4e-3, 2e-3, 1e-3, 0.5e-3, 0.2e-3 \\
learning rate decay       & 0.99                  &                                  \\
learning rate decay steps & 100                   & 100, 150, 200, 400               \\
dropout rate              & 0.1                   &                                  \\
network size (width x depth)               & 1024x8                & 1024x4, 1024x8, 2048x4, 2048x8  \\
activation                             & ReLU                  &                            \\
\hline
\end{tabular}
\end{table}

\subsubsection*{Simulated Pushing, States, Explicit MDN-BC}
\begin{table}[H]
\scriptsize
\begin{tabular}{|l|l|l|}
\hline
\textbf{Hyperparameter}   & \textbf{Chosen Value} & \textbf{Swept Values} \\
\hline
train iterations          & 100,000               &                       \\
batch size                & 512                   &                       \\
sequence length           & 2                     &                       \\
learning rate             & 0.001                 &                       \\
learning rate decay       & 0.99                  &                       \\
learning rate decay steps & 100                   &                       \\
dropout rate              & 0.1                   &                       \\
network size (width x depth)              & 512x8                 & 512x8, 512x16         \\
training temperature      & 1.0                   & 0.5, 1.0, 2.0         \\
test temperature          & 1.0                   & 0.5, 1.0, 2.0         \\
test variance exponent    & 1.0                   & 1.0, 4.0              \\
\hline
\end{tabular}
\end{table}

\subsubsection*{Simulated Pushing, Pixels, Implicit BC (EBM)}
\begin{table}[H]
\scriptsize
\begin{tabular}{|l|l|l|}
\hline
\textbf{Hyperparameter}   & \textbf{Chosen Value} & \textbf{Swept Values}                       \\
\hline
EBM variant               & DFO                   & \\
train iterations          & 100,000               &                                             \\
batch size                & 128                   & 128, 256                                    \\
sequence length           & 2                     &                                             \\
learning rate             & 0.001                 &                                             \\
learning rate decay       & 0.99                  &                                             \\
learning rate decay steps & 100                   &                                             \\
image size                & 240x180               & 120x90, 240x180                             \\
MLP network size (width x depth)               & 1024x4                & 512x4, 1024x4, 256x14, 256x26, 1024x14, 1024x26 \\
Conv. Net.                & 4-layer ConvMaxPool           & \\
activation                & ReLU                  &                                             \\
dense layer type          & regular               &                                             \\
train counter examples    & 256                   &                                             \\
action boundary buffer    & 0.05                  &                                             \\
gradient penalty          & none                  &                                             \\
dfo samples               & 4096                  & 1024, 4096, 16384                           \\
dfo iterations            & 3                     &                                            \\
\hline
\end{tabular}
\end{table}

\subsection*{Simulated Pushing Pixels MSE-BC}
\begin{table}[H]
\scriptsize
\begin{tabular}{|l|l|l|}
\hline
\textbf{Hyperparameter}   & \textbf{Chosen Value} & \textbf{Swept Values}      \\
\hline
train iterations          & 100,000               &                            \\
batch size                & 64                    &                            \\
sequence length           & 2                     &                            \\
learning rate             & 0.001                 &                            \\
learning rate decay       & 0.99                  &                            \\
learning rate decay steps & 100                   &                            \\
image size                & 240x180               & 120x90, 240x180            \\
dropout rate (MLP only)              & 0.1                   &                            \\
network size (width x depth)               & 512x4                 & 128x2, 128x4, 512x2, 512x4 \\
Conv. Net.                & 4-layer ConvMaxPool           & \\
activation                             & ReLU                  &                            \\
coord conv                & True                  & True, False               \\
\hline
\end{tabular}
\end{table}
\subsection*{Simulated Pushing Pixels MDN-BC}
\begin{table}[H]
\scriptsize
\begin{tabular}{|l|l|l|}
\hline
\textbf{Hyperparameter}   & \textbf{Chosen Value} & \textbf{Swept Values} \\
\hline
train iterations          & 100,000               &                       \\
batch size                & 32                    &                       \\
sequence length           & 2                     &                       \\
learning rate             & 0.001                 &                       \\
learning rate decay       & 0.99                  &                       \\
learning rate decay steps & 100                   &                       \\
dropout rate (MLP only)              & 0.1                   &                       \\
image size                & 120x90                & 120x90, 240x180       \\
network num components    & 26                    &                       \\
network size (width x depth)               & 512x8                 & 512x8, 512x16         \\
Conv. Net.                & 4-layer ConvMaxPool           & \\
activation                             & ReLU                  &                            \\
training temperature      & 2.0                   & 0.5, 1.0, 2.0         \\
test temperature          & 2.0                   & 0.5, 1.0, 2.0         \\
test variance exponent    & 4.0                   & 1.0, 4.0              \\
\hline
\end{tabular}
\end{table}

\subsection{Simulated $N$-D Particle Environment Experiments}

For a detailed description of this environment and its dynamics, see Section \ref{nd_particle_environmnet_description}. We used the following hyper parameters for evaluation on this environment:

\subsubsection*{Particle Implicit BC (EBM)}

\begin{table}[H]
\scriptsize
\begin{tabular}{|l|l|}
\hline
\textbf{Hyperparameter}              & \textbf{Chosen Value}  \\
\hline
EBM variant                          & Langevin               \\
train iterations                     & 50,000                  \\
batch size                           & 128                   \\
sequence length                      & 2                     \\
learning rate                        & 0.001                   \\
learning rate decay                  & 0.99                   \\
learning rate decay steps            & 100                   \\
network size (width x depth)         & 128x16                 \\
activation                           & ReLU                   \\
dense layer type                     & spectral norm          \\
train counter examples               & 64                   \\
gradient penalty                     & final step only       \\
gradient margin                      & 1                       \\
langevin iterations                  & 50                     \\
langevin learning rate init.         & 0.1                    \\
langevin learning rate final         & 1.00E-05              \\
langevin polynomial decay power      & 2                     \\
langevin delta action clip           & 0.1                   \\
langevin noise scale                 & 1.0                    \\
langevin 2nd iteration learning rate & not used              \\
\hline
\end{tabular}
\end{table}

\subsubsection*{Particle Explicit MSE-BC}

\begin{table}[H]
\scriptsize
\begin{tabular}{|l|l|}
\hline
\textbf{Hyperparameter}   & \textbf{Chosen Value}   \\
\hline
train iterations          & 100,000                                                                        \\
batch size                & 512                                                                               \\
sequence length           & 2                                                                                  \\
learning rate             & 0.001                                                                               \\
learning rate decay       & 0.99                                                                                 \\
learning rate decay steps & 200                                                                                 \\
dropout rate              & 0.1                                                                                 \\
network size (width x depth)              & 128x16               \\
activation                & ReLU                                                                        \\
\hline
\end{tabular}
\end{table}

\subsection{Simulated Sweeping Experiments}

For {\textbf{Planar Sweeping}}, for both explicit and implicit models, results are shown for different types of encoders, and different $\#$ of Dense ResNet layers (Sec.~\ref{subsec:architectures}) shown in the table, each is the average of 100 evaluations each across 3 different seeds.  The best models, for each implicit and explicit, were taken from Planar Sweeping and evaluated on {\textbf{Bi-Manual Sweeping}}. 

We used the following hyper parameters for evaluation on the simulated planar sweeping, and bi-manual sweeping environment:

\subsubsection*{Planar Sweeping Implicit BC (EBM)}
\begin{table}[H]
\scriptsize
\begin{tabular}{|l|l|l|}
\hline
\textbf{Hyperparameter}              & \textbf{Chosen Value} & \textbf{Swept Values}      \\
\hline
EBM variant                            & Autoregressive       &                            \\
train iterations                       & 1,000,000             &                            \\
batch size                             & 64                    &                            \\
sequence length                        & 2                     &                            \\
learning rate                          & 1e-4                  & 1e-3, 1e-4                 \\
Conv. Net.                & ConvResNet           & \\
\# encoder features                    & 64                    &                                                                \\
\# Conv ResNet encoder layers          & 26                    &                            \\
\# spatial softmax heads               & 64                    & 8, 16, 32, 64              \\
\# dense ResNet layers                 & 20                    & 8, 14, 20                  \\
activation                             & ReLU                  &                            \\
train counter examples per action dim  & 1024                  & 128, 256, 512, 1024        \\
inference examples per action dim      & 1024                  & 128, 256, 512, 1024        \\
\hline
\end{tabular}
\end{table}

\subsubsection*{Planar Sweeping Explicit MSE-BC}
\begin{table}[H]
\scriptsize
\begin{tabular}{|l|l|l|}
\hline
\textbf{Hyperparameter}   & \textbf{Chosen Value} & \textbf{Swept Values}                                          \\
\hline
train iterations          & 1,000,000             &                                                                \\
batch size                & 64                    &                                                                \\
sequence length           & 2                     &                                                                \\
learning rate             & 1e-4                  & 1e-3, 1e-4                                                     \\
Conv. Net.                & ConvResNet           & \\
\# encoder features       & 64                    &                                                                \\
\# Conv ResNet encoder layers          & 26                    &                            \\
\# spatial softmax heads  & 64                    & 8, 16, 32, 64                                                  \\
\# dense ResNet layers    & 20                    & 8, 14, 20                                                      \\
activation                & ReLU                  &                                                                \\
\hline
\end{tabular}
\end{table}

\subsubsection*{Bi-manual Sweeping Implicit BC (EBM)}
\begin{table}[H]
\scriptsize
\begin{tabular}{|l|l|}
\hline
\textbf{Hyperparameter}              & \textbf{Chosen Value}   \\
\hline
EBM variant                            & Autoregressive                             \\
train iterations                       & 1,000,000                                       \\
batch size                             & 32                                               \\
sequence length                        & 2                                               \\
learning rate                          & 1e-4                                             \\
Conv. Net.                & ConvResNet            \\
\# encoder features                    & 64                                             \\
\# Conv ResNet encoder layers          & 26                                              \\
\# spatial softmax heads               & 64                                                \\
\# dense ResNet layers                 & 20                                            \\
activation                             & ReLU                                            \\
train counter examples per action dim  & 1024                                            \\
inference examples per action dim      & 1024                                           \\
\hline
\end{tabular}
\end{table}

\subsubsection*{Bi-manual Sweeping Explicit MSE-BC}
\begin{table}[H]
\scriptsize
\begin{tabular}{|l|l|}
\hline
\textbf{Hyperparameter}   & \textbf{Chosen Value}                                        \\
\hline
train iterations          & 1,000,000                                                                         \\
batch size                & 32                                                                                  \\
sequence length           & 2                                                                                   \\
learning rate             & 1e-4                                                                                 \\
Conv. Net.                & ConvResNet            \\
\# encoder features       & 64                                                                                   \\
\# Conv ResNet encoder layers  & 26                                                                              \\
\# spatial softmax heads  & 64                                                                                   \\
\# dense ResNet layers    & 20                                                                                   \\
activation                & ReLU                                                                                  \\
\hline
\end{tabular}
\end{table}

\subsection{Real-world Pushing Experiments}\label{subsec:real-robot-configs}

For {\textbf{Real World}}, explicit and implicit models were taken from Simulated Pushing, Pixels, and applied to the real world.  We used the following hyper parameters for evaluation on the real-world pushing environments:

\subsubsection*{Real-world Tasks Pixels Implicit BC (EBM)}

\begin{table}[H]
\scriptsize
\begin{tabular}{|l|l|l|l|l|}
\hline
\textbf{Hyperparameter}   & \textbf{Pushing}      & \textbf{Pushing Multimodal} & \textbf{Insertion} & \textbf{Sorting} \\
\hline
EBM variant               & DFO                   & DFO                         & DFO                 & DFO             \\
train iterations          & 100,000               & 100,000                     & 100,000             & 100,000             \\
batch size                & 128                   & 256                         & 256                 & 256             \\
sequence length           & 2                     & 2                           & 2                   & 2             \\
learning rate             & 0.001                 & 0.001                       & 0.001               & 0.001             \\
learning rate decay       & 0.99                  & 0.99                        & 0.99                & 0.99             \\
learning rate decay steps & 100                   & 100                         & 100                 & 100             \\
image size                & 120x90                & 120x90                      & 120x90              & 120x90                          \\
MLP network size (width x depth) & 1024x4         & 1024x4                      & 2048x4              & 1024x4             \\
Conv. Net.                & 4-layer ConvMaxPool & 4-layer ConvMaxPool & 4-layer ConvMaxPool & 4-layer ConvMaxPool \\
activation                & ReLU                  & ReLU                        & ReLU                & ReLU                          \\
dense layer type          & regular               & regular                     & regular             & regular                          \\
train counter examples    & 256                   & 256                         & 256                 & 256             \\
action boundary buffer    & 0.05                  & 0.05                        & 0.05                & 0.05             \\
gradient penalty          & none                  & none                        & none                & none             \\
dfo samples               & 1024                  & 1024                        & 2048                & 2048             \\
dfo iterations            & 3                     & 3                           & 3                   & 3                          \\
\hline
\end{tabular}
\end{table}

\subsubsection*{Pushing Pixels MSE-BC}

\begin{table}[H]
\scriptsize
\begin{tabular}{|l|l|l|l|l|}
\hline
\textbf{Hyperparameter}   & \textbf{Pushing}      & \textbf{Pushing Multimodal} & \textbf{Insertion} & \textbf{Sorting} \\
\hline
train iterations          & 100,000               & 100,000                     & 100,000            & 100,000             \\
batch size                & 128                   & 128                         & 128                & 128                 \\
sequence length           & 2                     & 2                           & 2                  & 2                 \\
learning rate             & 0.001                 & 0.001                       & 0.001              & 0.001                 \\
learning rate decay       & 0.99                  & 0.99                        & 0.99               & 0.99                 \\
learning rate decay steps & 100                   & 100                         & 100                & 100                 \\
image size                & 120x90                & 120x90                      & 120x90              & 120x90                          \\
dropout rate (MLP only)               & 0.1                   & 0.1                         & 0.1                & 0.1                               \\
MLP network size (width x depth) & 512x4          & 1024x4                      & 1024x4             & 1024x4           \\
Conv. Net.                & 4-layer ConvMaxPool & 4-layer ConvMaxPool & 4-layer ConvMaxPool & 4-layer ConvMaxPool \\
activation                & ReLU                  & ReLU                        & ReLU               & ReLU                              \\
\hline
\end{tabular}
\end{table}

\section{Model Architectures}\label{subsec:architectures}

\begin{figure}[H]
\centering
  \includegraphics[width=1.0\textwidth]{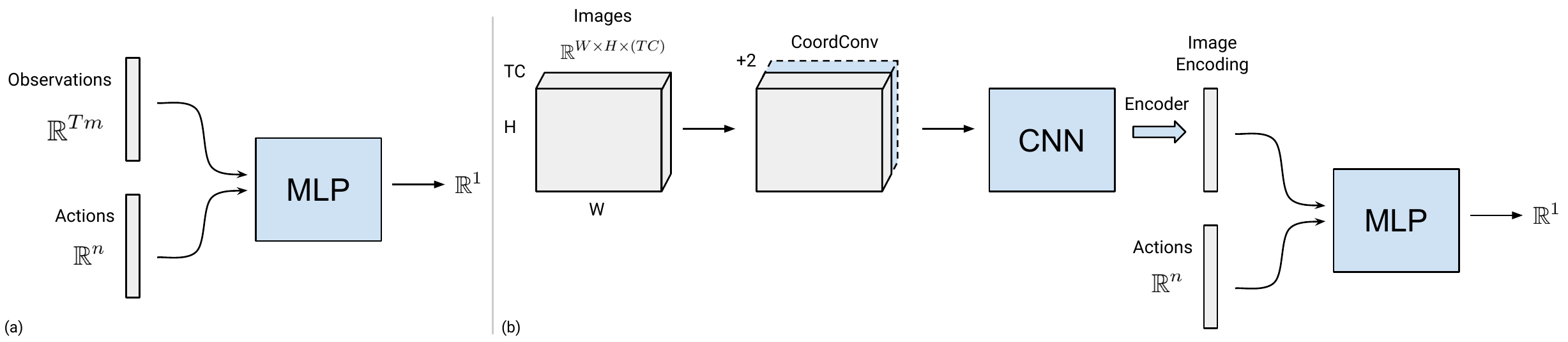}
  \caption{Simple depictions of architectures used for state-observation models (a), and visuomotor models (b).  $T$ is sequence length, $m$ is observation dimensionality, $n$ is action dimensionality, $W$, $H$, $C$ are image width, height and channels. }.
  \label{fig:archs}
  \vspace{-1em}
\end{figure}

\subsection{MLPs}

For non-image-observation models, we use MLPs (Multi Layer Perceptrons) that when used as EBMs (Fig.~\ref{fig:archs}a), take in the actions and output an energy in $\mathbb{R}^1$, or when trained as MSE models instead output the actions.  All results shown used ReLU activations, although we experimented with Swish as well. Configurable model elements consisted of: Dropout \cite{srivastava2014dropout}, using ResNet skip connections \cite{he2016identity}, and spectral normalization dense layers instead of regular dense layers \cite{miyato2018spectral}.

\subsection{ConvMLPs}

For visuomotor models (Fig.~\ref{fig:archs}b), we use the common ConvMLP \cite{levine2016end} style architecture, but when used as an EBM, concatenate actions with image encodings from a CNN model.  The MLP portion is identical to the section above.  For the CNNs, for all models for the sweeping experiments, we used 26-layer ResNets \cite{he2016deep} (``ConvResNets'') which maintain full-image spatial resolution before the encoder. For the simulated and real-world pushing experiments, we used a progressively-spatially-reduced model (``ConvMaxPool'') composed of interleaving convolutions with max-pooling, with feature dimensions [32, 64, 128, 256]. Both models used 3x3 convolution kernels. Configurable options include: using CoordConv \cite{liu2018intriguing}, i.e. a pixel coordinate map augmented as input, and either spatial soft (arg)max \cite{levine2016end} or global average pooling encoders.




\section{Proofs}

In this section we prove Theorems \ref{thm:any-function-theorem} and \ref{universal-approximation-tmp} as stated in Section 5 of the paper.

\begin{figure}[htb]
\begin{subfigure}{.5\textwidth}
  \centering
  \includegraphics[width=.9\linewidth]{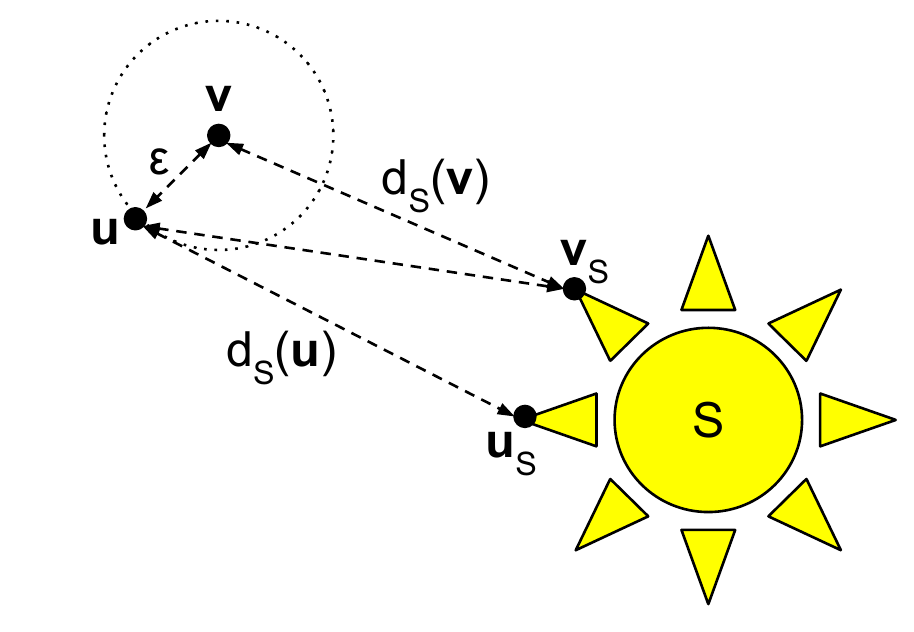}
  \caption{Distance to a set, $S$.}
  \label{fig:dist-to-set-triangle}
\end{subfigure}
\begin{subfigure}{.5\textwidth}
  \centering
  \includegraphics[width=.9\linewidth]{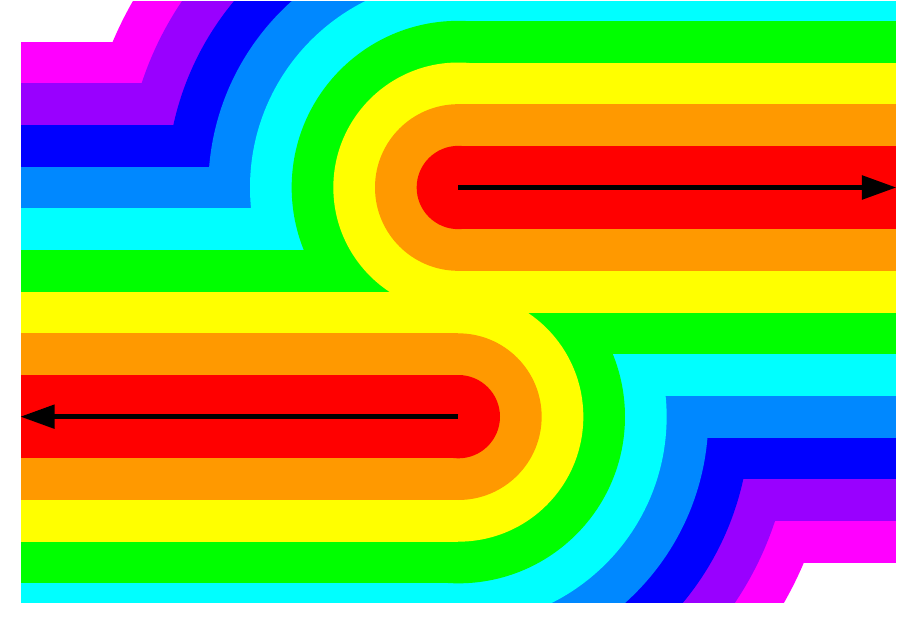}
  \caption{Distance function isarithms for a step function.}
  \label{fig:dist-to-set-func}
\end{subfigure}
\end{figure}

\subsection{Definitions}

A function $f$ is {\em Lipschitz continuous} with constant $L$ if $||f(\bfx) - f(\bfy)|| \leq L ||\bfx - \bfy||$ for all $\bfxy$.
We say that $f$ is $L$-Lipschitz, so a 1-Lipschitz function is a function that is continuous with Lipschitz constant 1.
The magnitude of the gradient of an $L$-Lipschitz function is always less than or equal to $L$.

The {\em distance function} from a point $\bfx \in \Rn$ to a non-empty set, $S \subset \Rn$ is defined as:
\[
d_S(\bfx) = \underset{\bfx'\in S}{\inf}||\bfx-\bfx'||
\]

A {\em closed set} is a set that contains all of its boundary points (points that can be approached from the interior and exterior of the set).
Equivalently, a set if closed if and only if it contains all of its limit points (points that are the limit of some sequence of points in the set).

The {\em power set} of $\Rn$, $P(\Rn)$ is the set of all subsets of $\Rn$ including the empty set and all of $\Rn$.

The {\em{graph}}, $\mathcal{G}_F$, of a function $F:\Rm\to \Rn$ is the set of points:
\[
\mathcal{G}_F\ =\ \{ (\bfxFx)\ \forall\ \bfx\in\Rm\} \subset \Rmn
\]

The {\em{graph}}, $\mathcal{G}_F$, of a multi-valued function $F:\Rm\to P(\Rn)$ is the set of points:
\[
\mathcal{G}_F\ =\ \{ (\bfxy)\in\Rmn\ \mid\ \bfx\in\Rm,\ \bfy\in F(\bfx)\}
\]

\subsection{Proofs}

\begin{lemma}\label{lemma:dist-func-continuous}
The {\em{distance function}} from any point $\bfv$ to a non-empty set, $S \subset \Rn$, is well-defined and 1-Lipschitz.
\end{lemma}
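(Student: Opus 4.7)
The plan is to handle the two claims separately and keep the argument at the level of elementary properties of the infimum plus the triangle inequality. For well-definedness, I would observe that since $S$ is non-empty, the set $\{ \|\bfv - \bfx'\| : \bfx' \in S \}$ is a non-empty subset of $\Real$ bounded below by $0$, so its infimum exists and lies in $[0,\infty)$ by the completeness of the reals. This gives $d_S(\bfv)$ as a well-defined real number for every $\bfv \in \Rn$.

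For the 1-Lipschitz bound, I would fix arbitrary $\bfu, \bfv \in \Rn$ and show $|d_S(\bfu) - d_S(\bfv)| \le \|\bfu - \bfv\|$. The key step is the pointwise triangle inequality: for every $\bfx' \in S$,
\[
\|\bfu - \bfx'\| \;\le\; \|\bfu - \bfv\| + \|\bfv - \bfx'\|.
\]
Since $d_S(\bfu) \le \|\bfu - \bfx'\|$ for every $\bfx' \in S$, this yields $d_S(\bfu) \le \|\bfu - \bfv\| + \|\bfv - \bfx'\|$ for all $\bfx' \in S$. Taking the infimum of the right-hand side over $\bfx' \in S$ produces $d_S(\bfu) - d_S(\bfv) \le \|\bfu - \bfv\|$. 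Swapping the roles of $\bfu$ and $\bfv$ gives the reverse inequality, and combining the two yields the desired Lipschitz bound with constant $1$.

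There is essentially no hard step here: the only mild subtlety is the order-of-quantifiers move when passing the infimum from one side of the inequality to the other, which must be done in the correct order (first use $d_S(\bfu) \le \|\bfu - \bfx'\|$ as an inequality valid for every $\bfx'$, then take the infimum over $\bfx'$ on the right). I will write the proof as two short paragraphs following the structure above, with no appeal to closedness or boundedness of $S$ beyond non-emptiness, since neither is needed for this lemma.
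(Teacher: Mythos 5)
Your proposal is correct, and the core idea — triangle inequality plus elementary properties of the infimum — is the same as the paper's. The execution differs in one worthwhile respect: the paper's proof introduces points $\bfu_S$ and $\bfv_S$ that \emph{attain} the distances $d_S(\bfu)$ and $d_S(\bfv)$, locating them in the closure of $S$, and then chains the triangle inequality through $\bfv_S$. That attainment step is not free — it is only justified later (Lemma~\ref{lemma:dist-func-achieved}, via compactness, and only for closed sets), and strictly speaking the proof must also use that the distance to $S$ equals the distance to its closure. Your version sidesteps all of this: you use only that $d_S(\bfu) \le \|\bfu - \bfx'\| \le \|\bfu-\bfv\| + \|\bfv - \bfx'\|$ holds for \emph{every} $\bfx' \in S$ and then take the infimum over $\bfx'$ on the right, which requires nothing beyond non-emptiness of $S$. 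This is the cleaner and more self-contained route; the only thing the paper's phrasing buys is a slightly more concrete picture (the triangle in Fig.~\ref{fig:dist-to-set-triangle}), at the cost of an unstated dependency. Your noted caution about the order of quantifiers when passing to the infimum is exactly the right point to be careful about, and your handling of it is correct.
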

\begin{proof}
The distance function from a point $\bfv$ to a non-empty set, $S$ is defined as:

\[
d_S(\bfv) = \underset{\bfv_S\in S}{\inf}||\bfv-\bfv_S||
\]

The set of distance values is a set of positive real numbers, so the infimum exists due to the {\em completeness} of $\Real$.
Therefore the distance function is well defined.

For any $\bfv$, let $\bfv_S$ be a point in the closure of $S$ with $||\bfv - \bfv_S|| = d_S(\bfv)$.
Then, to establish continuity using the triangle inequality, we can state that
for a given $\bfu$ at a distance $\epsilon$ from $\bfv$ (as pictured in Fig.~\ref{fig:dist-to-set-triangle}),
\begin{align*}
||\bfu - \bfv||\ &=\ \epsilon\\
d_S(\bfu)\ &=\ ||\bfu - \bfu_S|| \\ 
&\leq\ ||\bfu - \bfv_S|| &\text{$d_S$ is an infimum}\\ 
&\leq\ ||\bfu - \bfv|| + ||\bfv - \bfv_S|| &\text{by the triangle inequality}\\ 
d_S(\bfu)\ &\leq\ \epsilon + d_S(\bfv) \\
d_S(\bfv)\ &\leq\ \epsilon + d_S(\bfu) &\text{$\bfv$ and $\bfu$ can be exchanged.}\\
|d_S(\bfu) - d_S(\bfv)|\ &\leq\ \epsilon \\
|d_S(\bfu) - d_S(\bfv)|\ &\leq\ 1 \cdot ||\bfu - \bfv|| \\
\end{align*}

Since $\bfu$ and $\bfv$ can be reversed we have, $|d_S(\bfu) - d_S(\bfv)| < \epsilon$ and thus $d_S$ is continuous over $\Rn$ with a Lipschitz constant of 1.
\end{proof}

\begin{lemma}\label{lemma:dist-func-achieved}
If $d_S:\Rn\to\Real$ is the distance function to a {\em closed} set $S \subset \Rn$, then for every $\bfx\in\Rn$ there exists an element $\bfx'\in S$ such that $d_S(\bfx) = ||\bfx - \bfx'||$.
\end{lemma}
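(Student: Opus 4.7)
The plan is to show that the infimum defining $d_S(\bfx)$ is actually attained by exhibiting a point in $S$ that realizes the distance. I would obtain this point as the limit of a minimizing sequence and use the hypothesis that $S$ is closed to conclude the limit lies in $S$.

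\textbf{Step 1: Construct a minimizing sequence.} By the definition of infimum, for each integer $k \geq 1$ there exists $\bfx_k \in S$ with $\|\bfx - \bfx_k\| \leq d_S(\bfx) + 1/k$. Then $\|\bfx - \bfx_k\| \to d_S(\bfx)$ as $k \to \infty$.

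\textbf{Step 2: Extract a convergent subsequence.} The sequence $\{\bfx_k\}$ is bounded, since $\|\bfx_k\| \leq \|\bfx\| + \|\bfx - \bfx_k\| \leq \|\bfx\| + d_S(\bfx) + 1$ for all $k$. By the Bolzano–Weierstrass theorem in $\Rn$, there is a subsequence $\{\bfx_{k_j}\}$ converging to some $\bfx' \in \Rn$.

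\textbf{Step 3: Use closedness and continuity to finish.} Because $S$ is closed and each $\bfx_{k_j} \in S$, the limit $\bfx'$ belongs to $S$. By continuity of the norm, $\|\bfx - \bfx_{k_j}\| \to \|\bfx - \bfx'\|$; on the other hand this same sequence converges to $d_S(\bfx)$, so $\|\bfx - \bfx'\| = d_S(\bfx)$, which is what we needed.

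I do not anticipate a serious obstacle: the only subtlety is remembering that the hypothesis \emph{closed} (rather than merely non-empty) is exactly what is needed at Step 3 to keep the limit inside $S$, and that boundedness of the minimizing sequence (required for Bolzano–Weierstrass at Step 2) follows automatically from the triangle inequality since $d_S(\bfx)$ is finite.
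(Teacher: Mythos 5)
Your proof is correct and follows essentially the same route as the paper's: a minimizing sequence, a Bolzano--Weierstrass extraction of a convergent subsequence (the paper packages the boundedness by intersecting $S$ with a closed ball to get a compact set, which is the same idea), and closedness of $S$ to keep the limit point inside $S$.
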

\begin{proof}
Let $B$ be a closed ball of radius $d_S(\bfx) + 1$ around $\bfx$.
The distance from $\bfx$ to $B \cap S$ is equal to the distance from $\bfx$ to $S$.
Since $d_S$ is defined as an infimum, there must exist an infinite sequence of points $\{\bfx_i\} \subset B \cap S$ with distances $d_i = ||\bfx - \bfx_i||$ whose limit is $d_S(\bfx)$.
The set $B \cap S$ is closed and bounded and, therefore, compact.
The infinite sequence $\{x_i\}$ must therefore have at least one sub-sequence that converges to a point $\bfx' \in B \cap S$.
Since the distances of the full series converge to $d_S(\bfx)$, we know that $||\bfx - \bfx'|| = d_S(\bfx)$.
\end{proof}

\begin{lemma}\label{lemma:implicit-continuous}
For any continuous function $F(\bfx):\ \bfx \in \Rm \rightarrow \mathbb{R}^n$, the distance to the graph of $F$
is a continuous function $g(\bfxy): \ \Rmn \rightarrow \Real$,
such that $F_g(\bfx) = \argminy \ g(\bfxy) = F(\bfx)$ for all $\bfx$.
\end{lemma}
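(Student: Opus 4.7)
The plan is to establish the two claims of the lemma (continuity of $g$, and that its pointwise-in-$\bfx$ argmin over $\bfy$ equals $F(\bfx)$) by leveraging the two preceding lemmas about distance functions to non-empty and closed sets, together with the elementary fact that the graph of a continuous function on all of $\Rm$ is closed.

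First I would define $g(\bfxy)$ to be $d_{\mathcal{G}_F}(\bfxy)$, the distance from $(\bfxy)\in\Rmn$ to the graph $\mathcal{G}_F\subset\Rmn$ of $F$. Since $F$ is defined on all of $\Rm$, $\mathcal{G}_F$ is non-empty, so Lemma~\ref{lemma:dist-func-continuous} immediately gives that $g$ is well-defined and 1-Lipschitz on $\Rmn$, hence continuous. This disposes of the continuity claim.

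Next I would verify that $\mathcal{G}_F$ is closed. If $(\bfx_k,\bfy_k)\to(\bfxy)$ with $\bfy_k=F(\bfx_k)$, then continuity of $F$ gives $\bfy_k=F(\bfx_k)\to F(\bfx)$, forcing $\bfy=F(\bfx)$; hence $(\bfxy)\in\mathcal{G}_F$. With $\mathcal{G}_F$ closed I can then invoke Lemma~\ref{lemma:dist-func-achieved}: for every $(\bfxy)\in\Rmn$ the infimum is attained by some $(\bfx',F(\bfx'))\in\mathcal{G}_F$.

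Finally I would identify $\argminy g(\bfxy)$ for fixed $\bfx$. Observe that $(\bfx,F(\bfx))\in\mathcal{G}_F$, so $g(\bfx,F(\bfx))=0$, which shows $F(\bfx)\in\argminy g(\bfxy)$ and that the minimum value is $0$. For uniqueness, suppose $g(\bfx,\bfy')=0$; by the closedness of $\mathcal{G}_F$ and Lemma~\ref{lemma:dist-func-achieved}, there is $(\bfx'',F(\bfx''))\in\mathcal{G}_F$ with $\|(\bfx,\bfy')-(\bfx'',F(\bfx''))\|=0$, forcing $\bfx''=\bfx$ and $\bfy'=F(\bfx'')=F(\bfx)$. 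Hence $\argminy g(\bfxy)=\{F(\bfx)\}$, which (identifying the singleton with its element) equals $F(\bfx)$ as required.

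The proof is essentially a bookkeeping exercise once the two prior lemmas are in hand; the only conceptual step worth flagging is the closedness of $\mathcal{G}_F$, which is what lets Lemma~\ref{lemma:dist-func-achieved} apply and rules out a spurious zero of $g(\bfx,\cdot)$ away from $F(\bfx)$. No routine calculation is needed beyond the triangle-inequality argument already used in Lemma~\ref{lemma:dist-func-continuous}.
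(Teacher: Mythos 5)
Your proposal is correct and follows essentially the same route as the paper: define $g$ as the distance to $\mathcal{G}_F$, get continuity from Lemma~\ref{lemma:dist-func-continuous}, note that continuity of $F$ makes $\mathcal{G}_F$ closed so Lemma~\ref{lemma:dist-func-achieved} applies, and conclude that $g$ vanishes exactly on the graph. Your uniqueness step (showing $g(\bfx,\bfy')=0$ forces $\bfy'=F(\bfx)$ via the attained minimizer) is the contrapositive of the paper's argument that $g>0$ off the graph, and your explicit sequence argument for closedness of $\mathcal{G}_F$ is a small, welcome addition where the paper merely asserts it.
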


\begin{proof}
Let $g(\bfxy)$ be the distance in $\Rmn$ from the point $(\bfxy)$ to the graph of $F$.

The {\em{graph}}, $\mathcal{G}_F$, of a function $F:\Rm\to \Rn$ is the set of points:
\[
\mathcal{G}_F\ =\ \{ (\bfxFx)\ \forall\ \bfx\in\Rm\} \subset \Rmn
\]

Since the graph $\mathcal{G}_F$ is a non-empty set the distance function $g(\bfxy)$ is well defined and continuous, as shown in Lemma \ref{lemma:dist-func-continuous}.

We must still show that $F_g(\bfx) = \argminy \ g(\bfxy) = F(\bfx)$ for all $\bfx$.
We know that $g(\bfxy) \geq 0\ \forall \bfxy$, because $g$ is a distance function.

For any $\bfx \in \Rn$, clearly $g(\bfxFx) = 0$, since the point $(\bfxFx) \in \mathcal{G}_F$ and thus the distance from $(\bfxFx)$ to a point in $\mathcal{G}_F$ is zero.

Consider a point $(\bfxy)$ where $\bfy \neq F(\bfx)$ and therefore $(\bfxy) \notin \mathcal{G}_F$.
Since $F$ is continuous, $\mathcal{G}_F$ is closed and there will exist a point,
$(\bfx', F(\bfx')) \in \mathcal{G}_F$ that achieves the infimum, $d_\mathcal{G}((\bfxy)) = ||(\bfxy) - (\bfx', F(\bfx'))||$.

\[
||(\bfxy) - (\bfx', F(\bfx'))|| = ||(\bfx - \bfx', \bfy - F(\bfx'))|| \leq ||\bfx - \bfx'|| + ||\bfy - F(\bfx')||
\]

At least one of $\bfx \neq \bfx'$ or $\bfy \neq F(\bfx')$, so $g(\bfxFx) > 0$.

Therefore, for any $\bfx \in \Rm$, $g(\bfxy)$ achieves its unique minimum $g(\bfxy) = 0$ at $\bfy = F(\bfx)$ and thus
$F_g(\bfx) = \argminy \ g(\bfxy) = F(\bfx)$.
\end{proof}

\begin{figure}[htb]
\begin{subfigure}{.5\textwidth}
  \centering
  \includegraphics[width=.9\linewidth]{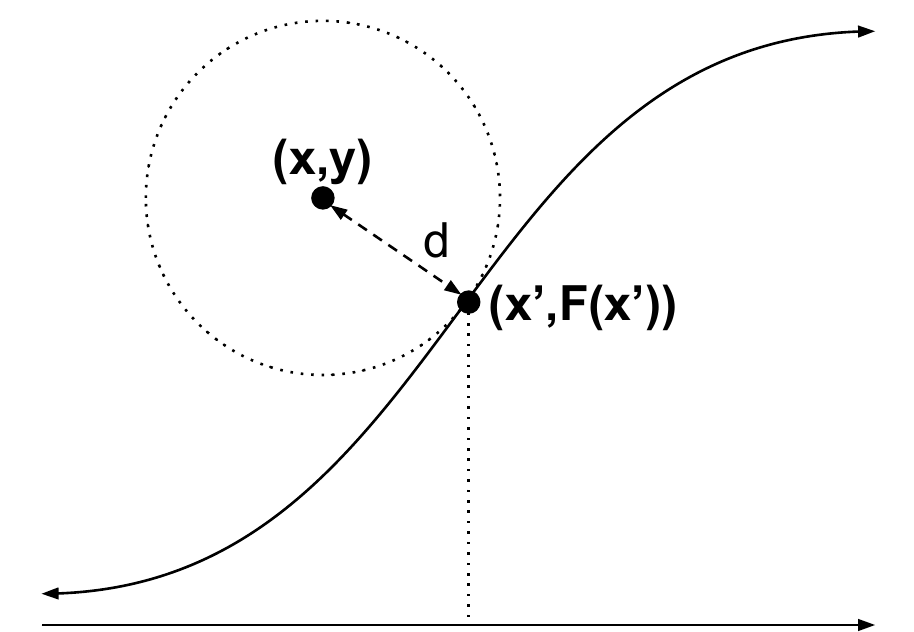}
  \caption{Distance to the graph of a continuous function.}
  \label{fig:dist-to-func}
\end{subfigure}%
\begin{subfigure}{.5\textwidth}
  \centering
  \includegraphics[width=.9\linewidth]{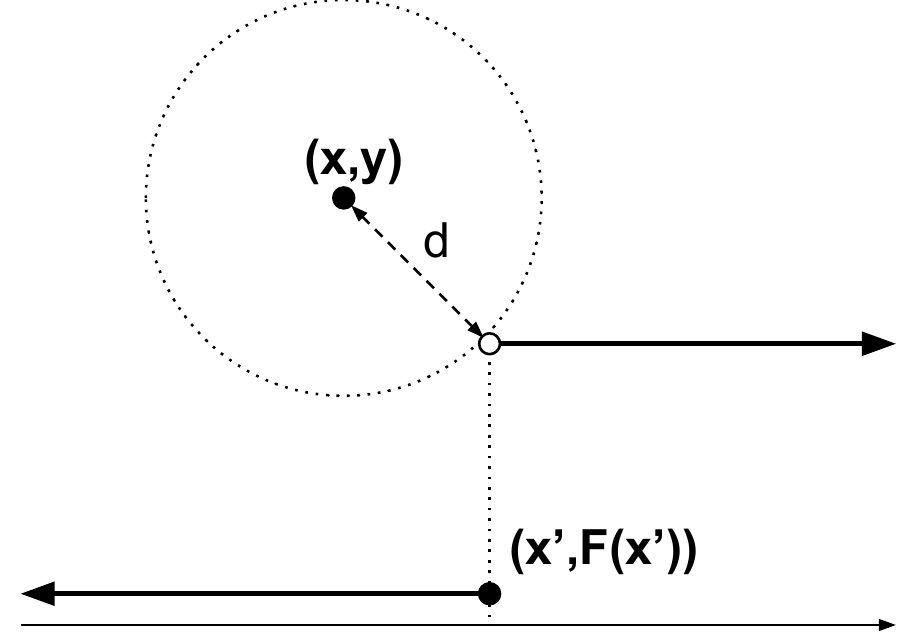}
  \caption{Distance to the graph of a (single-valued) discontinuous function.}
  \label{fig:dist-to-discontinuous-func}
\end{subfigure}
\end{figure}

We have shown that for $F:\Rm\to\Rn$ we can construct a continuous $g(\bfxy)$ that satisfies
$F_g(\bfx) = \argmin_\bfy \ g(\bfxy) = F(\bfx)$ for all $x \in \Rm$ if $F$ is single-valued and continuous.
However, the functions we are modeling are often discontinuous or multi-valued.
If the single-valued function is discontinuous, there will be open boundaries on the graph where the point that minimizes the distance function is not in the graph of $F$ (Fig.~\ref{fig:dist-to-discontinuous-func}).
In that example, there will be two values of $\bfy$ that minimize $g$ for the same value of $\bfx$, in which case $F_g(\bfx)$ will not be well defined as a single-valued function.  We can disambiguate the two cases to get a well-defined $F_g$, but we cannot reliably recover the original $F$ at the discontinuity.

In order to handle discontinuities and multi-valued functions, we will extend the definition to allow functions that map to multiple values, $F: \Rm \to \PRn$.
The multi-valued function $F$ maps from $\mathbb{R}^m$ to the {\em{power set}} $\PRn $, which is the set of all subsets of $\mathbb{R}^n$, except the empty set.
We no longer require continuity, but instead directly require the one important property of a continuous function that was used in the proof of Lemma \ref{lemma:implicit-continuous}, namely that the graph of $F$ is closed.
In the simple case of a jump discontinuity (as in fig. \ref{fig:dist-to-discontinuous-func}), the function must include both sides of the discontinuity.

\begin{customTheorem}{1}\label{thm:any-function-theorem}
For any multi-valued (set-valued) function $F(\bfx):\ \bfx \in \Rm \rightarrow \PRn$ where the graph of $F$ is closed,
there exists a 1-Lipschitz function $g(\bfxy): \ \Rmn \rightarrow \Real$,
such that $\argminy \ g(\bfxy) = F(\bfx)$ for all $\bfx$.
\end{customTheorem}

\begin{proof}
The {\em{graph}}, $\mathcal{G}_F$, of a multi-valued function $F:\Rm\to P(\Rn)$ is the set of points:
\[
\mathcal{G}_F\ =\ \{ (\bfxy)\in\Rmn\ \mid\ \bfx\in\Rm,\ \bfy\in F(\bfx)\}
\]

We can again define $g$ as the distance to $\mathcal{G}_F$.
Because $\mathcal{G}_F$ is a non-empty set, we know that $g$ is well-defined and uniformly continuous (Lemma \ref{lemma:dist-func-continuous}).
\[
g(\bfxy)\ =\ d_{\mathcal{G}_F}((\bfxy))
\]

We will now show that $g(\bfxy) = 0$ for all points in $\mathcal{G}_F$ and $g(\bfxy) > 0$ for all points not in $\mathcal{G}_F$.

For any point $(\bfxy)\in\mathcal{G}_F$, that is $\bfx\in\Rm$ and $\bfy\in F(\bfx)$, the distance from $(\bfxy)$ to $\mathcal{G}_F$ is zero.
\[
g(\bfxy)\ =\ d_{\mathcal{G}_F}((\bfxy))\ =\ 0
\]

For any point $(\bfxy)\notin\mathcal{G}_F$, that is $\bfx\in\Rm$ and $\bfy\notin F(\bfx)$, we must show that the distance to $\mathcal{G}_F$ is strictly positive.
Since the graph $\mathcal{G}_F$ is closed (because we require it to be so), by Lemma \ref{lemma:dist-func-achieved}, we know that
there exists a point $(\bfx',\bfy')\in\mathcal{G}_F$ that achieves the minimum distance exactly.

\begin{align*}
g(\bfxy) \ &=\ d_{\mathcal{G}_F}((\bfxy)) \\
&=\ ||(\bfxy) - (\bfx', \bfy')||\\
&=\ ||(\bfx - \bfx', \bfy - \bfy')|| \\
&\leq\ ||\bfx - \bfx'|| + ||\bfy - \bfy'||
\end{align*}

At least one of $\bfx \neq \bfx'$ or $\bfy \neq \bfy'$, so $g(\bfxy) > 0$.

Because the empty set is excluded from the range of $F$, there will be at least one $\bfy\in F(\bfx)$ for any $\bfx$.
Therefore, at $\bfx$, the minimum value of $g(\bfxy)$ will be zero and $\argminy g(\bfxy) = F(\bfx)$ exactly.

We have shown that the implicit function $g(\bfxy)$ is well-defined and continuous and has a Lipschitz value of 1, even if $F$ is very badly behaved.
\end{proof}

\textbf{Corollary 1.1}{\em{
The function $g(\mathbf{x}, \mathbf{y})$ in Thm.~\ref{thm:any-function-theorem} can be chosen to have an arbitrary positive Lipschitz constant.
}}

%
\begin{proof}
The distance function $d_S(\bfx):\Rn\to\Real$ from any point $\bfx$ to a non-empty set $S$,  has a Lipschitz constant of 1 (Lemma \ref{lemma:dist-func-continuous}).
Let $g_1\ =\ d_{\mathcal{G}_F}$, the distance to the graph of $F$.
If our desired Lipschitz constant is $L > 0$, we can compose $g_1$ with another function $f_L:\Real\to\Real$ that has a Lipschitz constant of $L$ to get $g_L = f_L \circ g_1$.
For example, if $f_L(x) = L x$, we get $g_L(\bfxy) = f_L(g_1(\bfxy)) = L\ d_{\mathcal{G}_F}(\bfxy)$.

\begin{align*}
    |g_L(\bfxy) - g_L(\bfx',\bfy')|
\ &=\ |f_L(g_1(\bfxy)) - f_L(g_1(\bfx',\bfy'))| \\ 
\ &\leq\ L\ |g_1(\bfxy) - g_1(\bfx',\bfy')| \\ 
\ &\leq\ L\ \cdot\ 1\ ||(\bfxy) - (\bfx',\bfy')|| 
\end{align*}
Therefore, $g_L$ has a Lipschitz constant of $L$.
\end{proof}


\begin{customTheorem}{2}\label{universal-approximation-tmp}
For any set-valued function $F(\bfx):\Rm \rightarrow \PRn$, there exists a continuous implicit function $g:\Rmn\to\Real$ that has a {\em continuous function approximator}, $g_{\theta}$ with arbitrarily small bounded error $\epsilon$,
that provides the guarantee that any point in the graph of $F_\theta(\bfx) = \argminy g_\theta(\bfxy)$ is within $\epsilon$ of the graph of $F$.
\end{customTheorem}

\begin{proof}
Let $g(\bfxy) = 2 d_{\mathcal{G}_F}(\bfxy)$, twice the distance to the graph of $F$.  By Thm. 1, this $g$ is continuous and satisfies $\argminy g(\bfxy)=F(\bfx)$.

For an arbitrary $\epsilon > 0$, let $g_\theta:\Rmn\to\Real$ be a function approximator for $g$ with bounded error $\epsilon$, $|g_\theta - g| < \epsilon$.
Since $g$ is a continuous function, the existence of a bounded-error function approximator 
is guaranteed by well-known results in universal approximation of continuous functions, for example \cite{cybenko1989approximation}.

The question now is whether bounded errors in $g_{\theta}$ approximating $g$, when composed with $\argmin$, can provide any guarantee on a property of $\argminy g_{\theta}(\bfxy)$.

Note that $F_\theta$ as an approximator for $F$ is unbounded, since $F$ may be badly behaved.
This can be demonstrated at any point where $F$ has a discontinuity.
Suppose $||\bfx-\bfx'||<\epsilon$ and $F(\bfx)$ and $F(\bfx')$ have values that are arbitrarily far apart.
Because of the $\epsilon$ error in $g_\theta$, the $\argminy g_\theta(\bfxy)$ may find values in $F(\bfx')$ that introduce arbitrary error in $F_\theta(\bfx)$.

Let $F_\theta(\bfx) = \argminy g_\theta(\bfxy)$.
For any point $(\bfxy)$ in the graph of $F_\theta$, we can show that $g_\theta(\bfxy) < \epsilon$.
Let $\bfy'$ be any point in $F(\bfx)$.
Since $(\bfxy')$ is in the graph of $F$, we know that $g(\bfxy') = 0$. 
With the bounded error, $|g_\theta - g| < \epsilon$, we know that $g_\theta(\bfxy') < \epsilon$.
Thus, although the argmin may be achieved elsewhere, $g_\theta(\bfxy)$ may not have a value greater than $\epsilon$.

\begin{align*}
g_\theta(\bfxy)\ & \leq\ g_\theta(\bfxy') \\
& <\ g(\bfxy') + \epsilon \\
g_\theta(\bfxy)\ &<\ \epsilon
\end{align*}

Also because $g_\theta$ is an approximator for $g$, we know that for ($\bfxy)$ in the graph of $F_{\theta}$, $g(\bfxy) < g_\theta(\bfxy) + \epsilon$ and thus $g(\bfxy) <\ 2 \epsilon$.
Since $g(\bfxy)$ is twice the distance to the graph of $F$, the distance from $(\bfxy)$ to the graph of $F$ is less than $\epsilon$.
\begin{align*}
g(\bfxy)\ &\leq\ g_\theta(\bfxy) + \epsilon\ \leq\ 2 \epsilon \\
g(\bfxy)\ &=\ 2 d_{\mathcal{G}_F}(\bfxy)\ \leq\ 2 \epsilon \\
d_{\mathcal{G}_F}(\bfxy)\ &\leq\ \epsilon \\
\end{align*}
Therefore, any point $(\bfxy)$ in the graph of $F_\theta$ must lie within $\epsilon$ of $\mathcal{G}_F$, the graph of $F$.

Note that this is not symmetric.
There are no guarantees on $g_\theta$ other than that it is continuous and within $\epsilon$ of $g$.
A point in $\mathcal{G}_F$ may be arbitrarily far from $\mathcal{G}_{F_\theta}$ as tiny variations in $g_\theta$ can eliminate some values from the set $F(\bfx)$.
\end{proof}




\section{Theory Implications and Discussion}

The practical implications of Theorems \ref{thm:any-function-theorem} and \ref{universal-approximation-tmp} are that they provide a number of favorable properties for real-world modeling tasks, such as robot policy learning, that exhibit discontinuities and multi-modalities. For one, implicit functions can model steep or discontinuous oracle policies without large gradients in the function approximator. We hypothesize this leads to policies with better stability characteristics and fewer generalization issues for out-of-domain samples. 
Thm.~1 also shows that implicit models can represent represent multi-valued (set-valued) functions, including ones with discontinuities.  
With Thm.~2 there is a notion provided of still having guarantees on the output of implicit inference, despite expected errors in the function approximator.  One way to consider the guarantee is via the level sets of the function, as shown in Figure 10 of the main paper.  To discuss an illustrative example, for a simple step function discontinuity (as in Figure 10) this guarantee provides that, although the precise ``decision boundary'' of the discontinuity may not be represented perfectly, that decision boundary can be approximated to arbitrary precision.  Further, even when the decision boundary is estimated imperfectly, the inferred values will correspond to either side of the discontinuity, and nowhere else (it will not, for example estimate a value somewhere halfway between the two sides of the discontinuity). 

Note also that Theorems 1 and 2 do not show that a learning algorithm will actually be able to recover a model with such properties, but only that such a model exists.  This is in line with other results in function approximation with neural networks, for example \cite{cybenko1989approximation}.  Additionally, in the implicit model case another consideration is not only whether the parameters $\theta$ of such a parameterized model, $g_{\theta}$ can be found, but then also whether at inference time the optimization problem $\argmin_\bfy g_{\theta}(\bfx, \bfy)$ can even be solved.  In general, $g(\cdot)$ will be non-convex, and may be a hard global optimization problem.  We have shown in practice, however, we are able to perform satisfactory inference on many policy learning tasks of interest.

\section{Limitations}

Although we have examined performance increases on several tasks using implicit BC policies over explicit BC policies, evaluated as a method for behavioral cloning, there are a few limitations. For one, compared to a simple Mean Square Error (MSE) BC policy, there is both increased training and inference computational cost.  As shown in Sec.~\ref{sec:timing}, however, the increase in inference time for the models used in the real world is modest, and we have validated the models in the real world to be fast enough for real-time vision-based control.  Further, the training times of our presented models are modest compared to reported numbers for offline RL methods (Sec.~\ref{sec:timing}) A second limitation is the increased implementation complexity of implicit models compared to explicit models.  However, we have provided a guide (Sec.~\ref{sec:ebm_variants_table}) for how to train the models, which we hope encourages readers to try implicit models.










\end{document}